\relax
%File: formatting-instructions-latex-2022.tex
%release 2022.1
\documentclass[letterpaper]{article} % DO NOT CHANGE THIS
\usepackage{aaai22}  % DO NOT CHANGE THIS
\usepackage{times}  % DO NOT CHANGE THIS
\usepackage{helvet}  % DO NOT CHANGE THIS
\usepackage{courier}  % DO NOT CHANGE THIS
\usepackage[hyphens]{url}  % DO NOT CHANGE THIS
\usepackage{graphicx} % DO NOT CHANGE THIS
\urlstyle{rm} % DO NOT CHANGE THIS
  % DO NOT CHANGE THIS
\usepackage{natbib}  % DO NOT CHANGE THIS AND DO NOT ADD ANY OPTIONS TO IT
\usepackage{caption} % DO NOT CHANGE THIS AND DO NOT ADD ANY OPTIONS TO IT
\DeclareCaptionStyle{ruled}{labelfont=normalfont,labelsep=colon,strut=off} % DO NOT CHANGE THIS
\frenchspacing  % DO NOT CHANGE THIS
\setlength{\pdfpagewidth}{8.5in}  % DO NOT CHANGE THIS
\setlength{\pdfpageheight}{11in}  % DO NOT CHANGE THIS
%
% These are recommended to typeset algorithms but not required. See the subsubsection on algorithms. Remove them if you don't have algorithms in your paper.
\usepackage{algorithm}
\usepackage{algorithmic}

%
% These are are recommended to typeset listings but not required. See the subsubsection on listing. Remove this block if you don't have listings in your paper.
\usepackage{newfloat}
\usepackage{listings}
\lstset{%
	basicstyle={\footnotesize\ttfamily},% footnotesize acceptable for monospace
	numbers=left,numberstyle=\footnotesize,xleftmargin=2em,% show line numbers, remove this entire line if you don't want the numbers.
	aboveskip=0pt,belowskip=0pt,%
	showstringspaces=false,tabsize=2,breaklines=true}
\floatstyle{ruled}
\newfloat{listing}{tb}{lst}{}
\floatname{listing}{Listing}
%
%\nocopyright
%
% PDF Info Is REQUIRED.
% For /Title, write your title in Mixed Case.
% Don't use accents or commands. Retain the parentheses.
% For /Author, add all authors within the parentheses,
% separated by commas. No accents, special characters
% or commands are allowed.
% Keep the /TemplateVersion tag as is
\pdfinfo{
/Title (Fast Line Search for Multi-Task Learning)
/Author (Author Authors, Author Authors)
/TemplateVersion (2022.1)
}

\setcounter{secnumdepth}{0} %May be changed to 1 or 2 if section numbers are desired.

% Title

% Your title must be in mixed case, not sentence case.
% That means all verbs (including short verbs like be, is, using,and go),
% nouns, adverbs, adjectives should be capitalized, including both words in hyphenated terms, while
% articles, conjunctions, and prepositions are lower case unless they
% directly follow a colon or long dash
\title{Fast Line Search for Multi-Task Learning}
\author {
    % Authors
    Andrey Filatov\equalcontrib,\textsuperscript{\rm 1,2}\\
    Daniil Merkulov\equalcontrib,\textsuperscript{\rm 1,2}\\
}
\affiliations {
    % Affiliations
    \textsuperscript{\rm 1} Skolkovo Institute of Science and Technology \\
    \textsuperscript{\rm 2} Moscow Institute of Physics and Technology \\
    filatov.av@phystech.edu, daniil.merkulov@skolkovotech.ru
}

%%% Additional packages
\def\gL{{\mathcal{L}}}
\usepackage{amsmath}
\usepackage{amssymb} 
\usepackage{bm}
\usepackage{amsthm}

\theoremstyle{definition}
\newtheorem{definition}{Definition}
\newtheorem{remark}{Remark}
\usepackage{algorithmic}
\usepackage{subcaption}
\usepackage{booktabs}
\usepackage{pifont}
\usepackage{thm-restate}

\begin{document}

\maketitle

\begin{abstract}
Multi-task learning is a powerful method for solving several tasks jointly by learning robust representation. Optimization of the multi-task learning model is a more complex task than a single-task due to task conflict. Based on theoretical results, convergence to the optimal point is guaranteed when step size is chosen through line search. But, usually, line search for the step size is not the best choice due to the large computational time overhead. We propose a novel idea for line search algorithms in multi-task learning. The idea is to use latent representation space instead of parameter space for finding step size. We examined this idea with backtracking line search. We compare this fast backtracking algorithm with classical backtracking and gradient methods with a constant learning rate on MNIST, CIFAR-10, Cityscapes tasks. The systematic empirical study showed that the proposed method leads to more accurate and fast solution, than the traditional backtracking approach and keep competitive computational time and performance compared to the constant learning rate method.

\end{abstract}

\section{Introduction}

Multi-task Learning (MTL) \cite{caruana1997multitask, ruder2017overview, zhang2017survey, standley2019tasks} is an approach to inductive transfer that improves generalization by using the domain information contained in the training signals of related tasks as an inductive bias. It does this by learning tasks in parallel while using a shared representation; what is learned for each task can help other tasks be learned better. Many MTL approaches have shown their efficacy and remarkable performance in many areas such as computer vision \cite{kokkinos2017ubernet, chennupati2019auxnet}, natural language processing \cite{subramanian2018learning, collobert2008unified}, and speech recognition \cite{huang2015rapid}. 

\begin{figure}
\begin{center}
    \includegraphics[width = 0.36\textwidth]{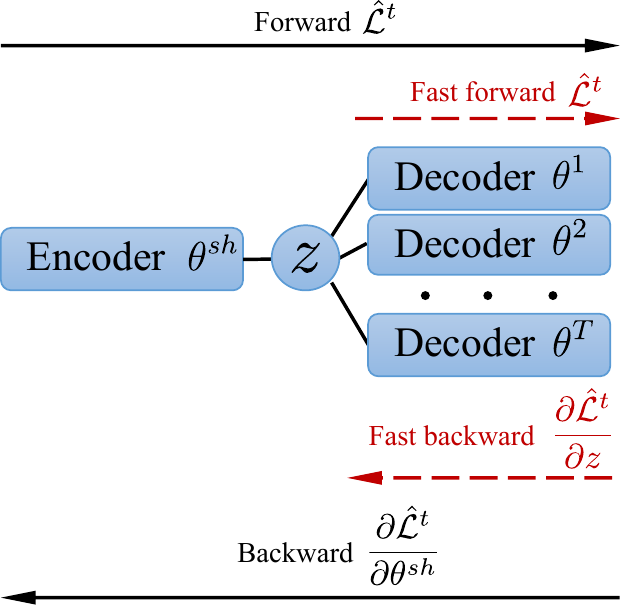}
    \caption{\textit{Solid:} Classical backtracking line search requires several forward and backward steps for choosing stepsize. \textit{Dashed:} We propose to do a fast approximate backtracking in the latent space in order to avoid full backpropagation.}
    \label{fig:fast_bt_scheme}
\end{center}
\end{figure}

\begin{figure*}
\begin{center}
    \includegraphics[width = \textwidth]{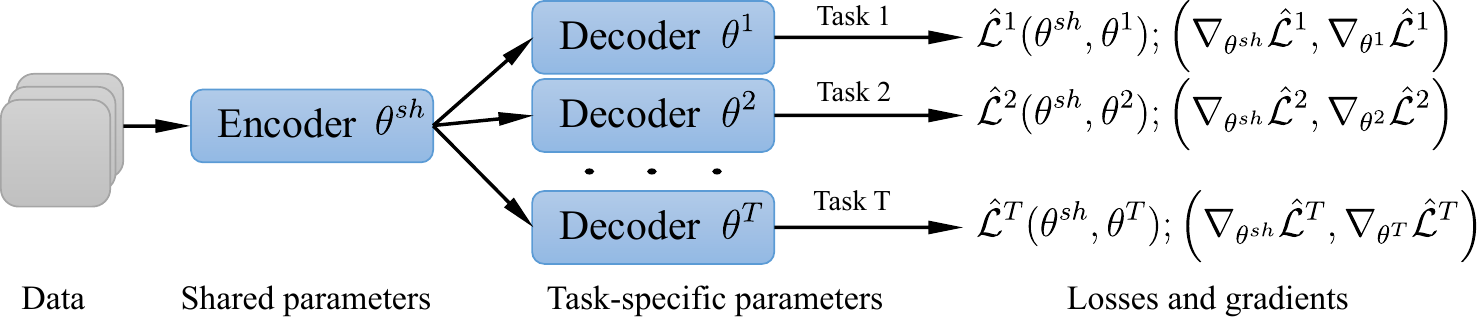}
    \caption{Example of hard parameter sharing architecture}
    \label{fig:harg}
\end{center}
\end{figure*}

A neural network is a state-of-the-art model for solving various machine learning tasks, and multi-task learning is not an exception. Optimization of neural networks often utilizes gradient descent methods. Several methods were proposed to adapt gradient descent for multi-task problems. The first approach is scalarization \cite{johannes1984scalarization}~--- a reduction of a multi-task problem to a single-task problem. The classical scalarization method is weighting~--- treating a multi-task problem as a weighted combination of single-task problems. This approach heavily relies on proper weight selection. Another approach that has no weight selection problem is treating multi-task learning as a multiobjective optimization problem. In this case, we search for direction minimizing all functions. Nevertheless, a problem of choosing step size arises in this approach.

Classical adaptive step size methods like ADAM \cite{kingma2014adam}, Nesterov Accelerated Gradient \cite{nesterov1983method}, RMSProp \cite{tieleman2012lecture}, SAG \cite{schmidt2017minimizing} can be applied for multi-task learning, and they perform well in practice but often comes with little theoretical guarantees for convergence. Using line search methods as backtracking \cite{armijo1966minimization, wolfe1969convergence, vaswani2019painless, vaswani2020adaptive} golden search and parabolic interpolation has theoretical guarantees for convergence but requires additional function calls, which in neural network case has a hefty cost. Especially it is impractical in multi-task learning when there are several task-specific modules, and each function call requires a forward step through a huge encoder.

Addressing the well-known problem of choosing learning rate, we propose a novel optimization idea, which is based on structural properties of multi-task models with "hard shared parametrization" \cite{caruana1997multitask}. Our idea is to do only task-specific model forward passes in backtracking line search (see Figure \ref{fig:fast_bt_scheme}). At the same time, instead of computing new encoder output, we use linear approximation, as it was suggested in \cite{sener2018multi}. It dramatically reduces line search time cost as the encoder containing most cost is inactive. That is why we name this idea \textit{Fast Backtracking Line Search} (FBLS) later in the text. 

We tested our method on Multi-MNIST \cite{sabour2017dynamic}, CIFAR-10 \cite{krizhevsky2014cifar} and Cityscapes \cite{cordts2016cityscapes} with fast backtracking, classical backtracking and constant learning rate stochastic gradient descent (SGD).  Also, we compared these algorithms on the BERT \cite{devlin2018bert} model as an encoder in an artificial multi-task learning setting. We measured average epoch time for the algorithms on the subset of STSB dataset \cite{cer2017semeval}. The outcomes of our work are the following:
\begin{itemize}
    \item We have proposed the Fast Backtracking Line Search method for multi-task learning and have proven its convergence to the Pareto stationary point.
    \item We have compared the proposed approach numerically with classical backtracking line search, SGD with a constant learning rate, and SGD in the latent space. The larger encoder is relative to the decoder size, the greater efficiency the proposed approach shows. It is worth noting that at a significant decrease of epoch time, in many experiments, final quality after the same number of epochs was not worse than SGD.
\end{itemize}

The work is organized as follows. In the first part, we introduce necessary preliminaries, which are specific for multiobjective optimization. In the second part, we introduce Fast Backtracking Line Search and provide the convergence theorem for it. Then, we show the results of our experiments for computer vision (CV) and natural language processing (NLP) models. In the end, we provide a review of related works in the field, followed by a conclusion. 

\section{Preliminaries: Problem and Notation}
\subsection{Notation.}
Consider a multi-task model with encoder-decoder structure (see Figure \ref{fig:harg}). Let $\bm{z}(\bm{x}, \bm{\theta}^{sh})$ be an encoder model. Let $\bm{f}^t(\bm{z}(\bm{x}, \bm{\theta}^{sh}), \bm{\theta}^{t})~=~\bm{f}^t(\bm{z}, \bm{\theta}^{t})$ be  task-specific decoder for task $t$. We use $\{ \bm{x}_i, \bm{y}_i^1, \ldots, \bm{y}_i^T\}_{i=1}^n$ to refer to a batch of objects and their corresponding labels. 

Vectors of parameters $\bm{\theta}^{sh}$ lie in the parameter space, while the decoder outputs $\bm{z}(\bm{x}, \bm{\theta}^{sh})$ belongs to the latent space. Let $\hat{\gL}^t(\bm{\theta}^{sh}, \bm{\theta}^t) = \mathbb{E}_{\bm{x}} \left[ \gL^t\left(\bm{z}(\bm{x}, \bm{\theta}^{sh}), \bm{f}(\bm{z}, \bm{\theta}^{t})\right)\right]$ be a loss function for task $t$. Let $\bm{d_z}$ and $\bm{d_{sh}}$ be some vectors in latent space and parameter space, respectively, which satisfies the following conditions:
    \[
        \forall t \in \{1, \ldots, T\}: \quad (\nabla_{\bm{z}} \hat{\gL}^t)^{\top} \bm{d}_z < 0
    \]
    \[
        \forall t \in \{1, \ldots, T\}: \quad (\nabla_{\bm{\theta}^{sh}} \hat{\gL}^t)^{\top} \bm{d}_{sh} < 0
    \]
So, applying gradient step with direction $\bm{d_z}$ in latent space or with direction $\bm{d_{sh}}$ in parameter space decreases all loss functions. 

\subsection{Problem formulation.}Consider multi-task learning problem with $T$ tasks:

\begin{equation}
\label{fastbtmtl:eq:pareto}
\begin{gathered}
\min _{\substack{\bm{\theta}^{sh},\\ \bm{\theta}^{1}, \ldots, \bm{\theta}^{T}}} \bm{\hat{\gL}}\left(\bm{\theta}^{sh}, \bm{\theta}^{1}, \ldots, \bm{\theta}^{T}\right),  \\ 
\bm{\hat{\gL}} = \left(\hat{\gL}^{1}\left(\bm{\theta}^{sh}, \bm{\theta}^{1}\right), \ldots, \hat{\gL}^{T}\left(\bm{\theta}^{sh}, \bm{\theta}^{T}\right)\right)^{\top}
\end{gathered}
\end{equation}

To solve this multi-task learning problem, we treat it as a multiobjective optimization problem and introduce partial order - Pareto dominance. 

\renewcommand\thesection{1}

\begin{definition}{(Pareto dominance).} We say, that a vector $\bm{\theta}_1$ is dominated by vector $\bm{\theta}_2$ iff:
\[
\forall t \in \{1,\ldots, T\}: \ 
 \hat{\gL}^t(\bm{\theta}_2) \leq \hat{\gL}^t(\bm{\theta}_1) 
\]
\[
\exists i: \hat{\gL}^i(\bm{\theta}_2) < \hat{\gL}^i(\bm{\theta}_1)
\]

\end{definition}

In our case, we treat a multiobjective optimization as a problem of finding a point that is not Pareto dominated --- a Pareto optimal point. 

\begin{definition}{(Pareto optimality).}
We say, that a vector $\hat{\bm{\theta}}$ in problem \ref{fastbtmtl:eq:pareto}  is Pareto optimal if and only if there $\nexists \ \bm{\theta}$ such that Pareto dominate $\hat{\bm{\theta}}$.
\end{definition}

In particular, if a point $\hat{\bm{\theta}}$ is not Pareto optimal, there is at least one other point that should be better. \cite{boyd2004convex}

Finding a Pareto optimal point could be a very challenging problem. However, we can try to satisfy the necessary condition for Pareto optimality - Pareto stationarity.

\begin{definition}{(Pareto stationarity).}
$\bar{\bm{\theta}}$ is a Pareto stationary point for problem 1 iff there exist $\alpha^{1}, \ldots, \alpha^{T} \geq 0$ such that $\sum_{t=1}^{T} \alpha^{t}=1$ and $\sum_{t=1}^{T} \alpha^{t} \nabla_{\bm{\theta}^{sh}} \hat{\gL}^{t}\left(\bar{\bm{\theta}}^{s h}, \bar{\bm{\theta}}^{t}\right)=0$ and $\forall \ t, \nabla_{\bm{\theta}^{t}} \hat{\gL}^{t}\left(\bar{\bm{\theta}}^{s h}, \bar{\bm{\theta}}^{t}\right)=0$
\end{definition}

Pareto stationarity is a necessary condition for Pareto optimality \cite{sener2018multi, desideri2012multiple}. 

To find Pareto stationary point one can use gradient methods such as MGDA \cite{sener2018multi}, EDM \cite{katrutsa2020follow} or PCGrad \cite{yu2020gradient}, which are forming decreasing direction, combining several gradients for different tasks. They provide a minimizing direction that minimizes all functions. Next, we have to choose a step size for going through this minimizing direction. To the best of our knowledge, there is no theoretical evidence for using constant step size or adaptive step-size gradient methods in smooth multiobjective minimization. There is an important result about convergence with steepest descent-like and Armijo-like line search \cite{fliege2000steepest} for gradient descent. 

The main idea of line search is to find step size $\eta$ which satisfies the following condition $\forall t\in \{1\ldots T\}$:

\begin{equation}
     \hat{\gL}^t(\bm{\theta}-\eta \bm{d}) < \hat{\gL}^t(\bm{\theta}).
    \label{condition_1}
\end{equation}

\begin{algorithm}[t]
\caption{Backtracking Line Search (BLS)}
\label{fastbtmtl:alg:back}
\begin{algorithmic}[1]
\REQUIRE{Loss functions $\hat{\gL}_1, \ldots, \hat{\gL}_T$, initial point $[\bm{\theta}^{sh}, \bm{\theta}^1, \ldots, \bm{\theta}^{T}]$, task-specific gradients $\nabla_{\bm{\theta}^{t}}\hat{\gL}^t$, minimizing direction $\bm{d}_{sh}$ }, hyperaparameters $\beta, \gamma, lr_{ub}$
\REPEAT
\STATE{$\eta := lr_{ub}$}
\REPEAT
\STATE{\fbox{$\bm{\tilde{\theta}}^{sh} \leftarrow \bm{\theta}^{sh} - \eta\cdot \bm{d}_{sh}$}}
\FOR{$t \leftarrow 1 \ \text{to} \ T$}
\STATE{$\tilde{\bm{\theta}^{t}} \leftarrow \bm{\theta}^{t} - \eta\cdot \nabla_{\bm{\theta}^{t}}\hat{\gL}^t$}
\ENDFOR
\STATE{$\eta \leftarrow \gamma \cdot \eta$}
\UNTIL{Armijo condition \ref{armijobs} $(\beta, \eta)$}
\FOR{$t \leftarrow 1 \ \text{to} \ T$}
\STATE{$\bm{\theta}^{t}_{new} \leftarrow \bm{\tilde{\theta}}^{t}$}
\ENDFOR
\STATE{\fbox{$\bm{\theta}^{sh}_{new} \leftarrow \bm{\tilde{\theta}}^{sh}$}}
\UNTIL{Stopping criterion}
\end{algorithmic}
\end{algorithm}

\begin{algorithm}[t]
\caption{Fast Backtracking Line Search (FBLS)}
\label{fastbtmtl:alg:main}

\begin{algorithmic}[1]
\REQUIRE{Loss functions $\hat{\gL}_1, \ldots, \hat{\gL}_T$, initial point $[\bm{\theta}^{sh}, \bm{\theta}^1, \ldots, \bm{\theta}^T]$, task-specific gradients $\nabla_{\bm{\theta}^t}\hat{\gL}^t$, minimizing direction $\bm{d}_z$, hyperaparameters $\beta, \gamma, lr_{ub}$ }
\REPEAT
\STATE{$\eta := lr_{ub}$}
\REPEAT
\STATE{\fbox{$\bm{z} \leftarrow \bm{z} - \eta\cdot \bm{d}_z$}}
\FOR{$t \leftarrow 1 \ \text{to} \ T$}
\STATE{$\tilde{\bm{\theta}^{t}} \leftarrow \bm{\theta}^{t} - \eta\cdot \nabla_{\bm{\theta}^{t}}\hat{\gL}^t$}
\ENDFOR
\STATE{$\eta \leftarrow \gamma \cdot \eta$}
\UNTIL{Armijo condition \ref{armijofbs} $(\beta, \eta)$}
\FOR{$t \leftarrow 1 \ \text{to} \ T$}
\STATE{$\bm{\theta}^{t}_{new} \leftarrow \tilde{\bm{\theta}}^{t}$}
\ENDFOR
% \STATE{$\nabla_{\bm{\theta}^{sh}} =$ Backward pass $(d_z)$}
\STATE{\fbox{$\bm{\theta}^{sh}_{new} \leftarrow \bm{\theta}^{sh} - \eta \cdot \left(\frac{\partial \bm{z}}{\partial \bm{\theta}^{sh}}\right)^{\top} \bm{d}_z$}}
\UNTIL{Stopping criterion}
\end{algorithmic}
\end{algorithm}

This condition means that at each step, all functions are decreasing. If, in addition, the functions $\gL$ are bounded from below, the exact line search (like in steepest descent) guarantees convergence to a Pareto stationary point. In our paper, we consider only backtracking line search (BLS), but the ideas work for an arbitrary line search method as well. The idea of backtracking is to reduce the step size by parameter $\gamma$ until the line search condition is satisfied. Instead of using the beforementioned condition, it is usually replaced by Armijo rule \cite{armijo1966minimization} with parameter $\beta \in [0, 1]$, $\forall t\in \{1\ldots T\}$:

\begin{equation}
     \hat{\gL}^t(\bm{\theta}-\eta \bm{d}) \leq \hat{\gL}^t(\bm{\theta}) - \eta\beta \left(\frac{\partial \hat{\gL}^t}{\partial \bm{\theta}}\right)^\top \bm{d}
    \label{condition_2}
\end{equation}

So, to find the appropriate step size, we should use a line search. Nevertheless, it is computationally ineffective as on each iteration, we should perform forward and backward pass through the entire encoder and all decoders. We repeat this procedure till the Armijo rule has been fulfilled. It may take several steps and cause heavy wastage of computational resources since we do several forward and backward passes for making one gradient step.

The problem of substantial computational overhead for backtracking may be especially relevant for models with huge encoders (relatively to the decoder size) in terms of the number of parameters. An example of such a model is the Transformer \cite{vaswani2017attention, liu2019roberta, dosovitskiy2020image, hu2021transformer, zhang2021bmt}, which has become the de-facto standard for NLP tasks. 

Generally, in NLP, a model has a heavy encoder to produce consistent word embeddings, while the decoder could be much smaller. In this case minimizing using encoder model can significantly improve computational speed. Therefore, in such models, latent space optimization could be of particular interest, as we only once use encoder just to calculate latent representation. In the following part, we propose a fast line search in the form of Fast Backtracking Line Search (FBLS) that deals with this problem.

\section{Method}

In this section, we propose an idea of using latent space for finding step size. The optimization in latent space allows not to use additional forward and backward passes through the encoder. To establish the capabilities of the method, we experimented with the line search backtracking algorithm.

Consider gradients in the parameter and latent spaces: $\nabla_{\bm{\theta}^{sh}} \gL^t, \nabla_{\bm{z}} \gL^t$. To
obtain minimization direction $\bm{d}_{sh}$ or $\bm{d}_z$ one can use
PCGrad \cite{yu2020gradient}, or EDM \cite{katrutsa2020follow}, or MGDA \cite{desideri2012multiple, sener2018multi}.

As we get the minimization direction, one can consider the classical backtracking approach for multi-task learning. In this case, Armijo-rule can be represented as $\forall t \in \{1\ldots T\}$:

\begin{equation}
\begin{split}
 &\hat{\gL}^t(\bm{\theta}^{sh}-\eta \bm{d}_{sh}, \bm{\theta}^{t}-\eta\nabla_{\bm{\theta}^{t}} \hat{\gL}^t) \leq \\ &\leq \hat{\gL}^t-\eta\beta \left\|\frac{\partial \hat{\gL}^t}{\partial \bm{\theta}^{t}}\right\|^2 - \eta\beta \left(\frac{\partial \hat{\gL}^t}{\partial \bm{\theta}^{sh}}\right)^\top \bm{d}_{sh}
 \end{split}
\label{armijobs}
\end{equation}

The complete classical backtracking algorithm is presented as Algorithm~\ref{fastbtmtl:alg:back}.

One can see that in the classical approach, one needs to update shared parameters several times on every iteration. We use the following local linear approximation:

\begin{equation}
\bm{z}(\bm{\theta}^{sh} - \eta {\bm{d}_{sh}}) \approx \bm{z} - \eta \bm{d}_z
\label{eq:enc_linear_approx}
\end{equation}

Then, one can get a new Armijo rule:
\begin{equation}
    \begin{split}
    &\hat{\gL}^t(\bm{z}-\eta \bm{d}_z, \bm{\theta}^{t} - \eta\nabla_{\bm{\theta}^{t}} \hat{\gL}^t) \leq \\
    &\leq \hat{\gL}^t - \eta\beta \left\|\frac{\partial \hat{\gL}^t}{\partial \bm{\theta}^{t}}\right\|^2 - \eta\beta \left(\frac{\partial \hat{\gL}^t}{\partial \bm{z}}\right)^{\top} \bm{d}_z
    \end{split}
    \label{armijofbs}
\end{equation}

The complete fast backtracking algorithm is presented as Algorithm~\ref{fastbtmtl:alg:main}. In this Armijo rule, we do not have updates of encoder parameters $\bm{\theta}^{sh}$. So we can keep the encoder frozen during the step size search stage.

In the following theorem, we prove that the new algorithm converges to Pareto stationary point.   

\begin{restatable}{theorem}{theconv}\label{armijo:theorem}
	Suppose loss functions $\hat{\gL}^t$ are continuously differentiable and bounded below. Then, every accumulation point of sequence $\bm{x_k} = [\bm{\theta^{sh}_k}, \bm{\theta^1_k}, \ldots, \bm{\theta}^{T}_k]_{k=1}^{\infty}$ produced by Algorithm \ref{fastbtmtl:alg:main} with following Armijo-rules is a Pareto stationary point $\forall t \in \{ 1 \ldots T \}$:
	\begin{align}
		&\hat{\gL}^t(\bm{z}-\eta \bm{d}_z, \bm{\theta}^{t} - \eta\nabla_{\bm{\theta}^{t}} \hat{\gL}^t) \leq \nonumber\\
		\ldots &\leq \hat{\gL}^t - \eta\beta \left\|\frac{\partial \hat{\gL}^t}{\partial \bm{\theta}^{t}}\right\|^2 \label{armijo1}\\
		\ldots & \leq \hat{\gL}^t - \eta\beta \left(\frac{\partial \hat{\gL}^t}{\partial \bm{\theta}^{t}}\right)^\top \bm{d}_z \label{armijo2} \\
		\ldots & \leq \hat{\gL}^t - \eta\beta \left(\frac{\partial \hat{\gL}^t}{\partial \bm{\theta}^{t}}\right)^\top \bm{d}_z - \eta\beta \left\|\frac{\partial \hat{\gL}^t}{\partial \bm{\theta}^{t}}\right\|^2 \label{armijo3}
	\end{align}
\end{restatable}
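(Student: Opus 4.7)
The plan is to adapt the classical Fliege--Svaiter convergence analysis for multiobjective steepest descent with Armijo line search to the latent-space setting of FBLS. The Armijo rule guarantees a strict per-iteration decrease of each $\hat{\gL}^t$; combined with the assumed lower bound, this forces the cumulative decrease to be finite, hence a certain directional-derivative quantity multiplied by the step size must tend to zero. Passing to the limit along any convergent subsequence and invoking continuous differentiability then yields the Pareto stationarity conditions.

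\textbf{Key steps.} First, I would verify that the inner backtracking loop terminates. By a first-order Taylor expansion in $\eta$, the left-hand side of each of (\ref{armijo1})--(\ref{armijo3}) equals $\hat{\gL}^t - \eta\bigl[\,\|\nabla_{\bm{\theta}^t}\hat{\gL}^t\|^2 + (\nabla_{\bm{z}}\hat{\gL}^t)^\top \bm{d}_z\bigr] + o(\eta)$; since $\beta<1$ and both bracketed terms are nonnegative by the descent property of $\bm{d}_z$ and the choice of the task gradient, each rule is satisfied for all sufficiently small $\eta$, and backtracking produces a well-defined $\eta_k>0$. Second, I would show that $\hat{\gL}^t(\bm{x}_k)$ is monotonically non-increasing along the outer iterates. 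Third, from boundedness below, $\hat{\gL}^t(\bm{x}_k) - \hat{\gL}^t(\bm{x}_{k+1}) \to 0$, and feeding this into each Armijo chain forces $\eta_k \cdot g^t_k \to 0$, where $g^t_k$ denotes the corresponding directional-derivative expression on the right-hand side.

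\textbf{Limit passage and stationarity.} For any accumulation point $\bar{\bm{x}}$ with $\bm{x}_{k_j} \to \bar{\bm{x}}$, I distinguish two cases. If $\limsup \eta_{k_j} > 0$, then along a further subsequence $g^t_{k_j} \to 0$, and continuity of the gradients gives $\nabla_{\bm{\theta}^t}\hat{\gL}^t(\bar{\bm{x}}) = 0$ for every $t$ and $(\nabla_{\bm{z}}\hat{\gL}^t(\bar{\bm{x}}))^\top \bm{d}_z(\bar{\bm{x}}) = 0$. If $\eta_{k_j} \to 0$, the previous trial step $\eta_{k_j}/\gamma$ violated the Armijo rule; dividing that violated inequality by $\eta_{k_j}/\gamma$ and taking $j \to \infty$ yields the same first-order equality via the standard contradiction argument. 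Because $\bm{d}_z$ is produced by an MGDA/PCGrad-style procedure, the vanishing of all latent-space directional derivatives at $\bar{\bm{x}}$ is equivalent to the existence of $\alpha^t \geq 0$ with $\sum_t \alpha^t = 1$ and $\sum_t \alpha^t \nabla_{\bm{z}} \hat{\gL}^t(\bar{\bm{x}}) = 0$. The chain-rule identity $\nabla_{\bm{\theta}^{sh}}\hat{\gL}^t = (\partial \bm{z}/\partial \bm{\theta}^{sh})^\top \nabla_{\bm{z}}\hat{\gL}^t$ then transfers this to parameter space, producing $\sum_t \alpha^t \nabla_{\bm{\theta}^{sh}} \hat{\gL}^t(\bar{\bm{x}}) = 0$, which together with $\nabla_{\bm{\theta}^t}\hat{\gL}^t(\bar{\bm{x}}) = 0$ is exactly Pareto stationarity.

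\textbf{Main obstacle.} The most delicate part of this argument is the second step. The Armijo rule is imposed on the \emph{surrogate} value $\hat{\gL}^t(\bm{z}_k - \eta_k \bm{d}_z,\,\bm{\theta}^t_k - \eta_k \nabla_{\bm{\theta}^t}\hat{\gL}^t)$, whereas the actual outer update $\bm{\theta}^{sh}_{k+1} = \bm{\theta}^{sh}_k - \eta_k (\partial \bm{z}/\partial \bm{\theta}^{sh})^\top \bm{d}_z$ produces a latent vector $\bm{z}_{k+1} = \bm{z}(\bm{\theta}^{sh}_{k+1})$ that differs from the surrogate $\bm{z}_k - \eta_k \bm{d}_z$ by an $O(\eta_k^2)$ linearization error. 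Pinning down this error via smoothness of the encoder, so that it is absorbed by the slack afforded by $\beta < 1$ and a genuine monotone decrease of $\hat{\gL}^t$ along $\bm{x}_k$ is preserved, is where the proof deviates from the classical multiobjective line-search argument; once that is done, the remaining steps are routine adaptations of the Fliege--Svaiter framework.
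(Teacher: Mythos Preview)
Your proposal follows the same Fliege--Svaiter two-case skeleton as the paper's own proof: both deduce from the Armijo decrease and the lower bound that $\eta_k\,g^t_k\to 0$, then split into $\limsup\eta_{k_j}>0$ (directional term vanishes directly) versus $\eta_{k_j}\to 0$ (the last rejected step contradicts feasibility of Armijo at the limit). The only cosmetic differences are that the paper, in the second case, first extracts a fixed task index $t_0$ by pigeonhole before passing to the limit, and that it offloads the ``latent stationarity $\Rightarrow$ parameter-space stationarity'' step to a citation of \cite{sener2018multi}, whereas you spell out the chain rule $\nabla_{\bm{\theta}^{sh}}\hat{\gL}^t=(\partial\bm{z}/\partial\bm{\theta}^{sh})^\top\nabla_{\bm{z}}\hat{\gL}^t$ explicitly.

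Where your write-up is actually \emph{sharper} than the paper is the point you flag as the main obstacle. The paper's proof simply writes ``$\hat{\bm{\gL}}(\bm{\theta}_{k_j})\to\hat{\bm{\gL}}(\bar{\bm{\theta}})$. Consequently, $\eta\beta\|\bm{u}^t\|^2\to 0$'' and proceeds, never confronting the discrepancy between the surrogate value $\hat{\gL}^t(\bm{z}_k-\eta_k\bm{d}_z,\cdot)$ on which Armijo is tested and the true next iterate, whose latent code is $\bm{z}(\bm{\theta}^{sh}_k-\eta_k(\partial\bm{z}/\partial\bm{\theta}^{sh})^\top\bm{d}_z)$. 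In other words, the paper tacitly assumes the monotone decrease of $\hat{\gL}^t$ along the actual $\bm{\theta}$-iterates that you propose to \emph{prove} via an $O(\eta_k^2)$ linearization-error estimate absorbed by the $\beta<1$ slack. Be aware, though, that carrying out your fix rigorously needs more than the stated ``continuously differentiable'' hypothesis on $\hat{\gL}^t$: you need enough smoothness of the encoder map $\bm{\theta}^{sh}\mapsto\bm{z}$ (e.g.\ a Lipschitz Jacobian) to get the quadratic remainder bound, and you also need $\eta_k$ uniformly small enough for the absorption to work. So your plan is sound and matches the paper's approach; the obstacle you identify is real, and closing it cleanly will require you to add an explicit smoothness assumption that neither the theorem statement nor the paper's proof supplies.
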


We prove Theorem \ref{armijo:theorem} in Appendix. Indeed, using a linear approximation of the encoder \eqref{eq:enc_linear_approx} instead of its full update allows us to keep the convergence properties for Armijo-like line search. In the next part, we conduct analysis FBLS, BLS, SGD, MGDA-UB methods.

\section{Experiments
% \footnote{
% Code for experiments is available at \url{https://github.com/Intelligent-Systems-Phystech/Filatov-BS-Thesis}}
}
The experiments can be divided into 4 groups by data type: MultiMNIST, CIFAR-10, Cityscapes, STS benchmark for BERT model.
\begin{table}[h!]
{\centering
\begin{tabular}{cccc}
\toprule
\textbf{Alg / Data} & \textbf{MultiMNIST}      & \textbf{CIFAR-10}  & \textbf{w/o} $\bm{\eta}$?\\ 
\midrule
SGD         & 14.3 (100 \%) & 550 (100 \%) & $\bf{x}$ \\ 
BLS         & 19.5 (136 \%) & 650 (118 \%) & \checkmark \\ 
MGDA-UB     & 13.6 (95  \%) & 80  (15  \%) & $\bf{x}$ \\ 
FBLS (Ours) & 14.3 (100 \%) & 85  (15  \%) & \checkmark \\ 
\bottomrule
\end{tabular}
\caption{Average epoch time in seconds for different algorithms on MultiMNIST and CIFAR-10 datasets. By "w/o~$\eta$?" we mean the ability to run an algorithm without selecting a learning rate $\eta$. The lower, the better. We took SGD as the baseline - 100 \%.}
\label{fastbtmtl:table:times}}
\end{table}

For comparison, we examined fast backtracking, classical backtracking, classical SGD, and Latent space SGD (MGDA-UB) \cite{sener2018multi}. Experiments were conducted on MultiMNIST, CIFAR-10, and Cityscapes datasets. In practice for Algorithms \ref{fastbtmtl:alg:back}, \ref{fastbtmtl:alg:main}, we have set additional stopping criterion what $\eta$ can't be lower some small $\varepsilon$.  For backtracking algorithms we have chosen learning rate lower bound $\varepsilon = 10^{-10}$, $\beta = 0.1$. The initial upper bound has been set as $1$. At first experiments, we observed that from some iteration step size converged to upper bound. This behavior contradicts the theory of gradient algorithms. We decay step size upper bound by decay rate $= 0.5$ every $N=10$ epochs to overcome this problem. Such version of algorithm is presented as "FBLS+decay" on the figures. Nevertheless, the time comparison, presented on the Table \ref{fastbtmtl:table:times} was averaged over $10$ epochs.

\subsection{MultiMNIST}
\begin{figure}[h!]
    \begin{subfigure}[b]{0.25\textwidth}
            \centering
            \includegraphics[width=\linewidth]{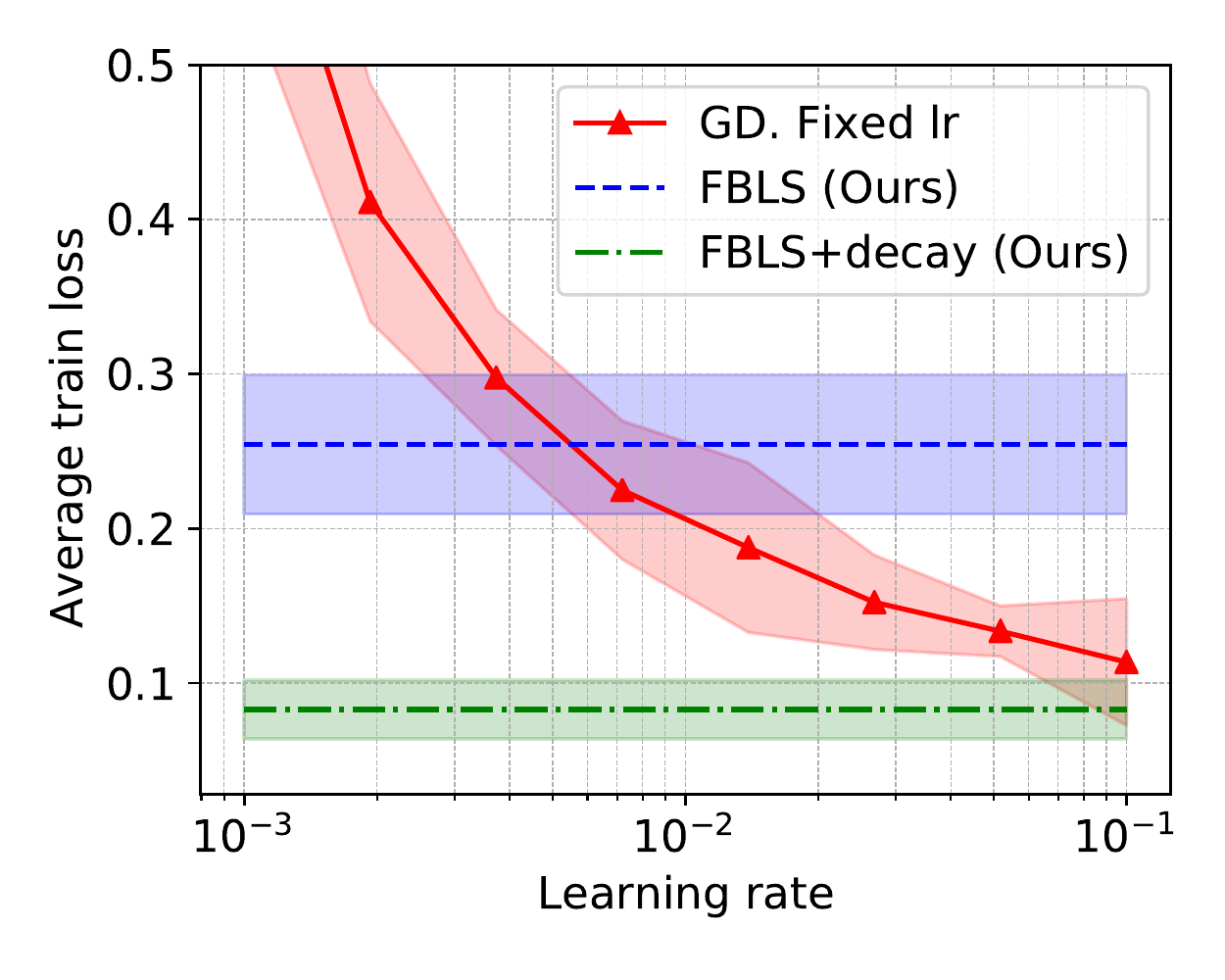}
    \end{subfigure}%
    \begin{subfigure}[b]{0.25\textwidth}
            \centering
            \includegraphics[width=\linewidth]{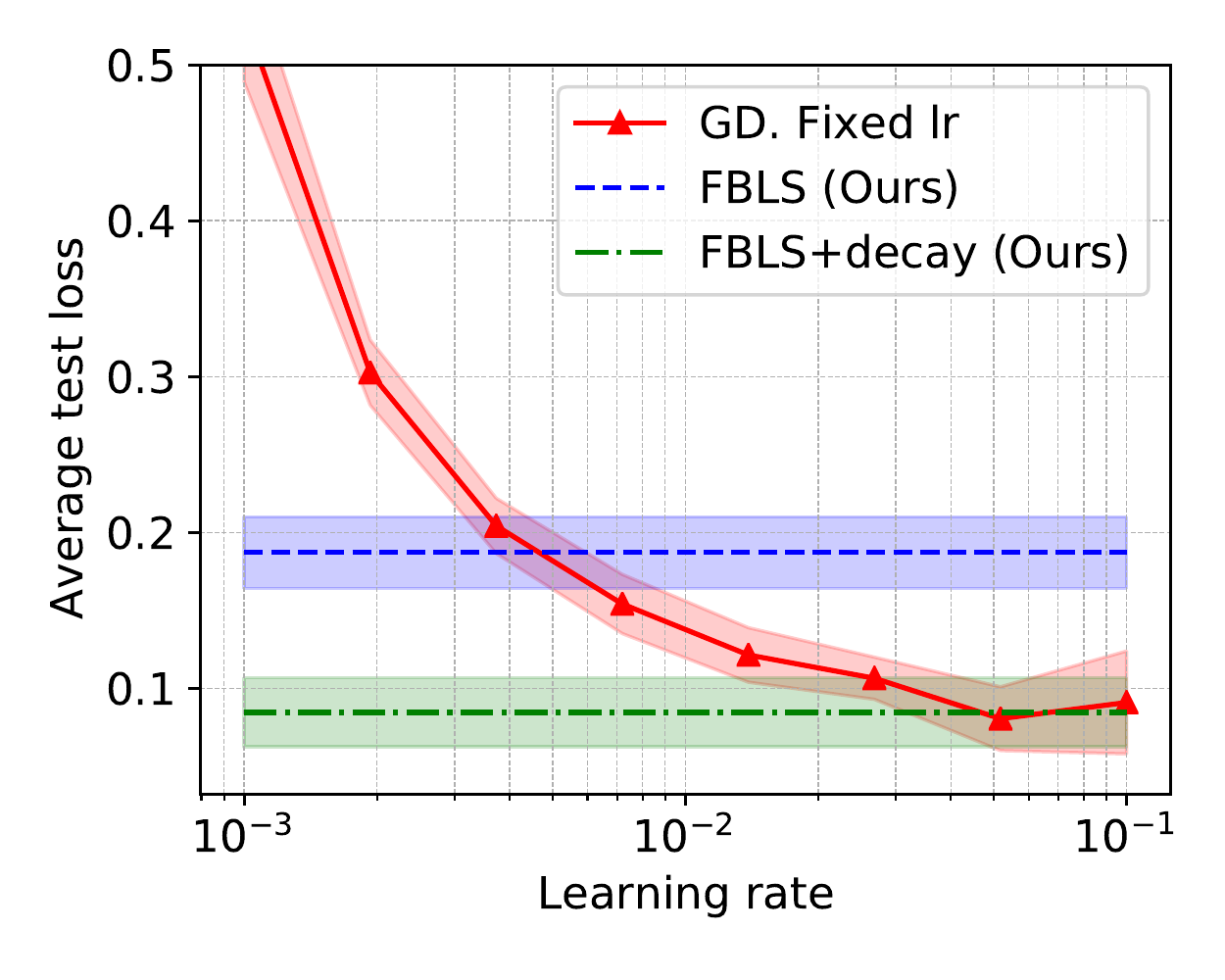}
    \end{subfigure}%
    \caption{Results of Backtracking Line Search perfomance and constant learning rate SGD after 100 epochs on MultiMNIST dataset.}
    \label{fastbtmtl:fig:mmnist}
\end{figure}

On MultiMNIST, we use LeNet-5 architecture. The batch size has been set as 256. The training was conducted for 100 epochs.
For comparison with standard gradient descent, we choose eight step sizes evenly from log-range between -3 and -1. As loss function, we chose cross-entropy loss, and as error, we consider (1 - accuracy). Results of SGD and FBLS were averaged over 3 experiments (Figure \ref{fastbtmtl:fig:mmnist}).

From Figure \ref{fastbtmtl:fig:mmnist} we can see that as good as classical gradient descent. Also, it takes the same time per epoch as classic stochastic gradient descent (see Table. \ref{fastbtmtl:table:times}).

\subsubsection{Classical backtracking}

For comparison of classical and fast backtracking line search we examined their behaviour at different $\beta = \{0.1, 0.2, 0.3, 0.4\}$. Averaged over different $\beta$ results presented in Figure \ref{fastbtmtl:fig:mmnist_fbs_bs}. 

\begin{figure}[h!]
    \begin{subfigure}[b]{0.25\textwidth}
            \centering
            \includegraphics[width=\linewidth]{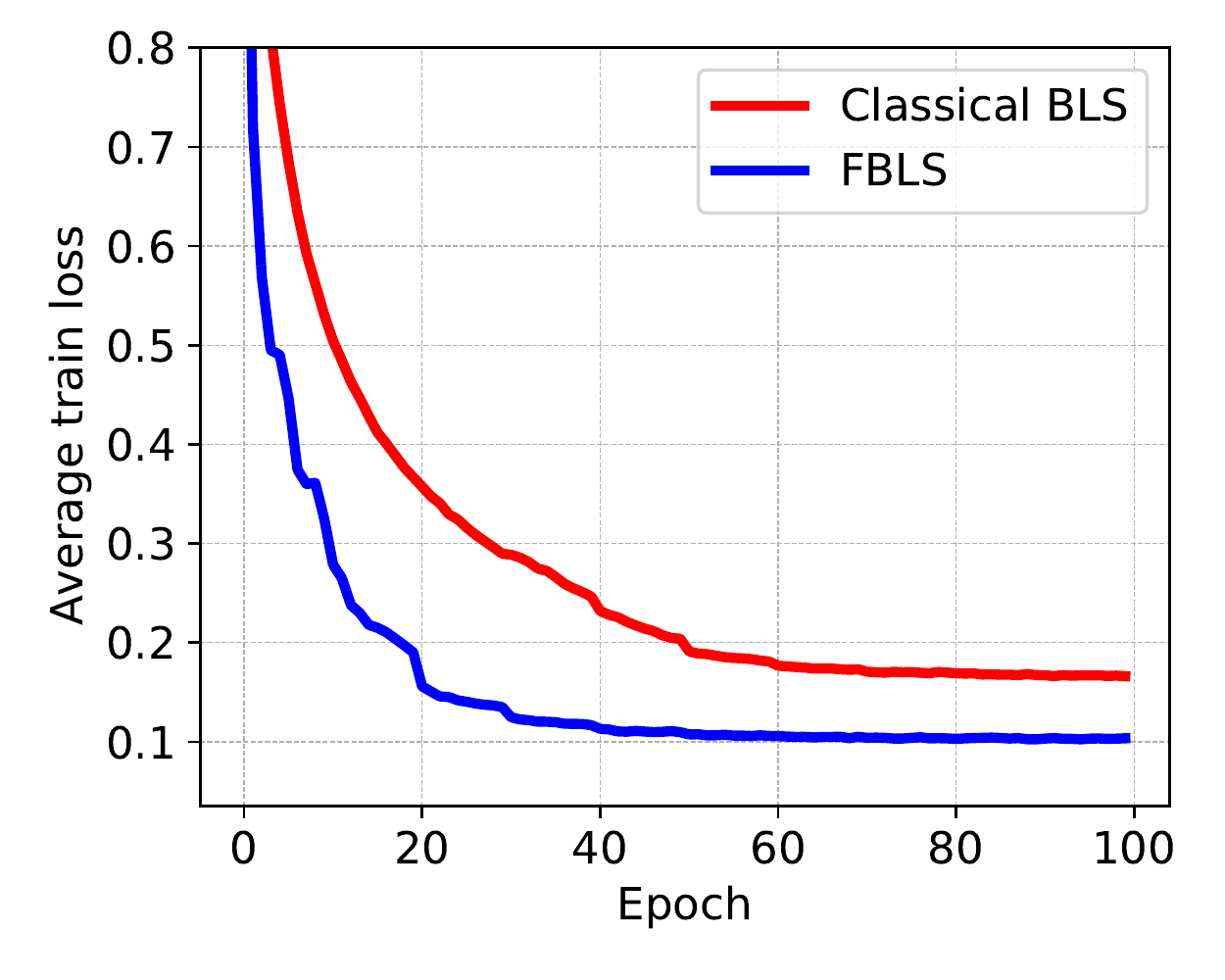}
    \end{subfigure}%
    \begin{subfigure}[b]{0.25\textwidth}
            \centering
            \includegraphics[width=\linewidth]{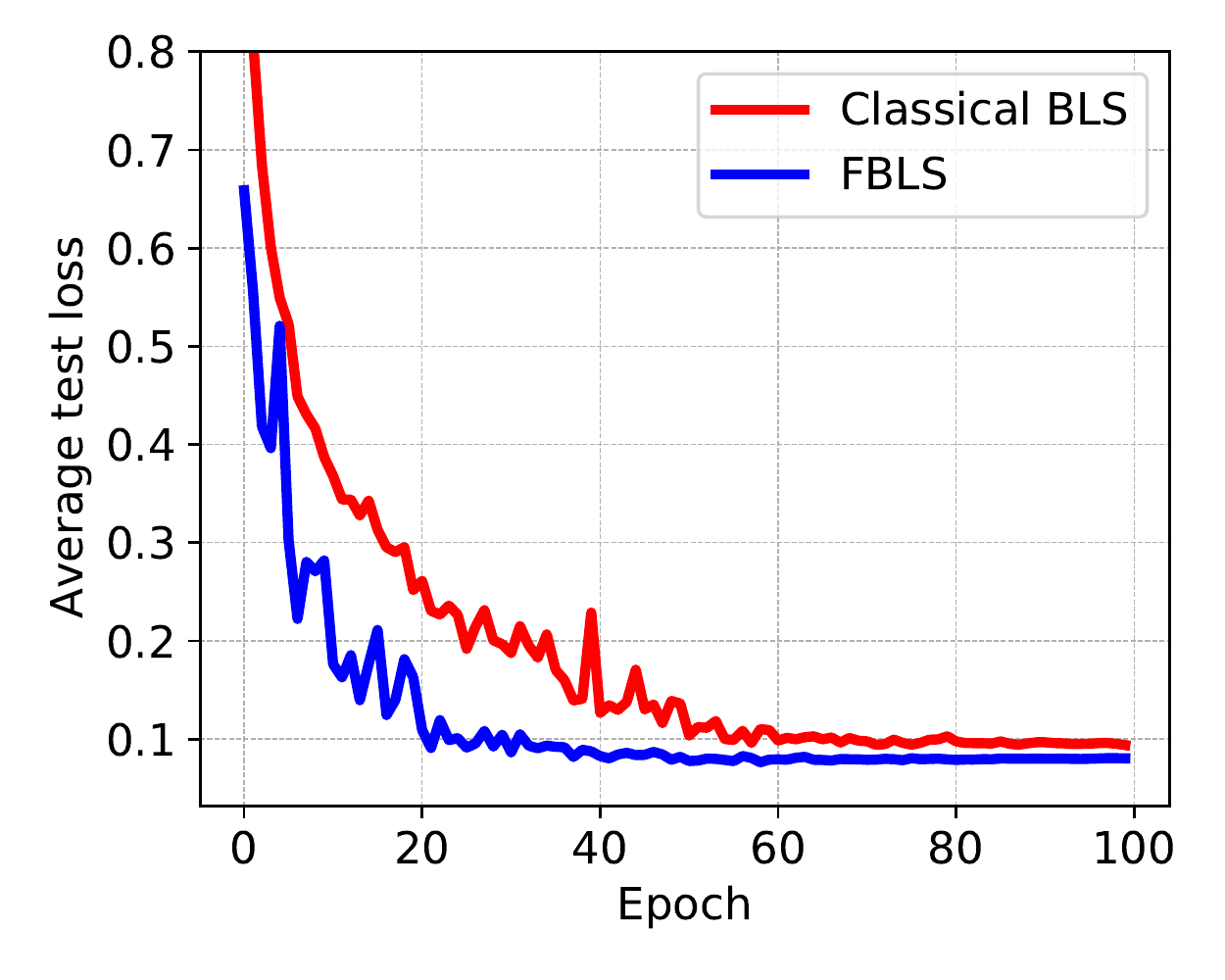}
    \end{subfigure}%
    \caption{Comparison of Classical Backtracking Line Search vs FBLS. Results averaged over four different $\beta$ parameters}
    \label{fastbtmtl:fig:mmnist_fbs_bs}
\end{figure}

From  Figure \ref{fastbtmtl:fig:mmnist_fbs_bs} one can see that fast backtracking overcome classical backtracking both on test and train. Also, fast backtracking is faster than classical backtracking by 36\% on the MultiMNIST (see Table \ref{fastbtmtl:table:times}).

\subsection{CIFAR-10}
\begin{figure}[h!]
    \begin{subfigure}[b]{0.25\textwidth}
            \centering
            \includegraphics[width=\linewidth]{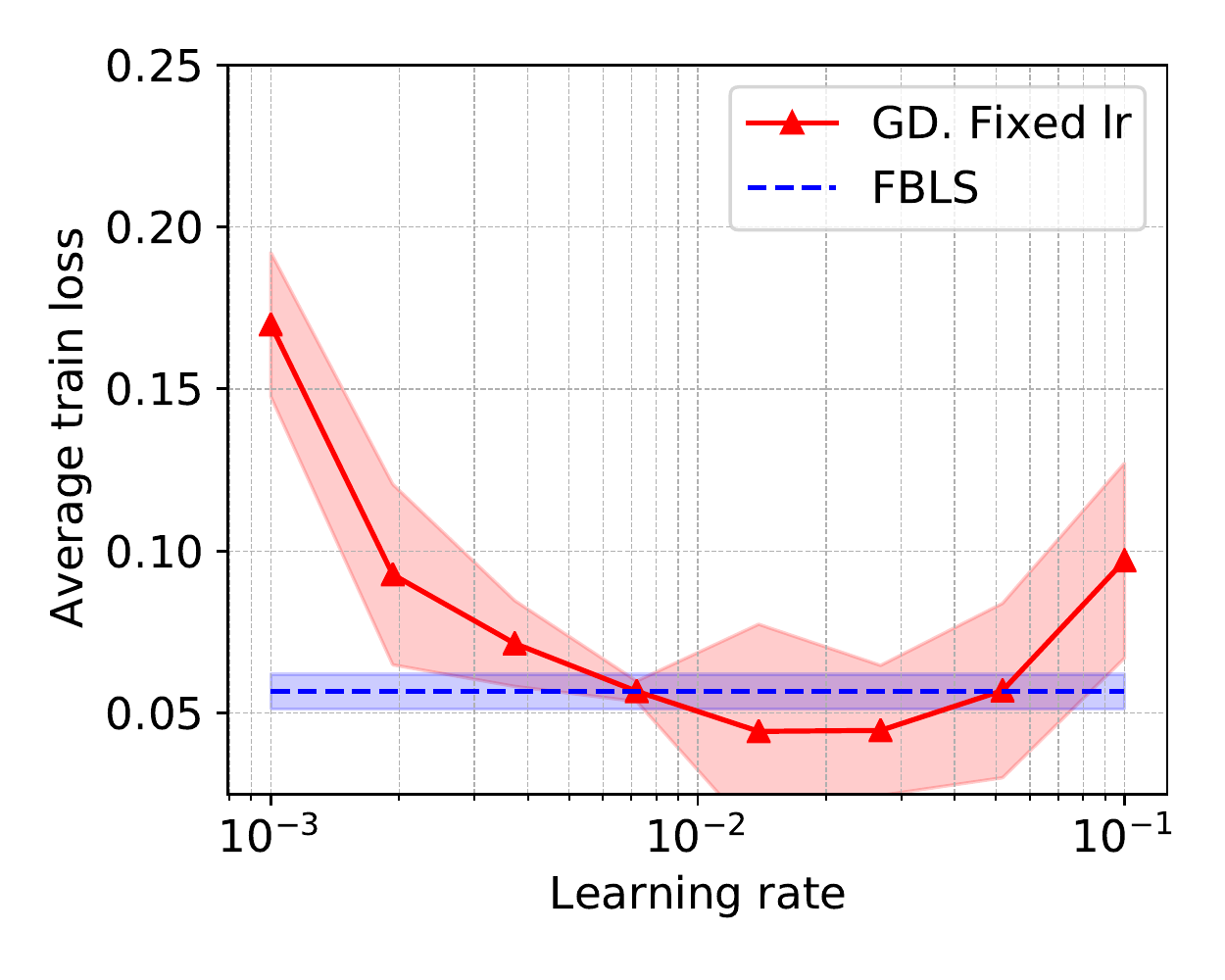}
    \end{subfigure}%
    \begin{subfigure}[b]{0.25\textwidth}
            \centering
            \includegraphics[width=\linewidth]{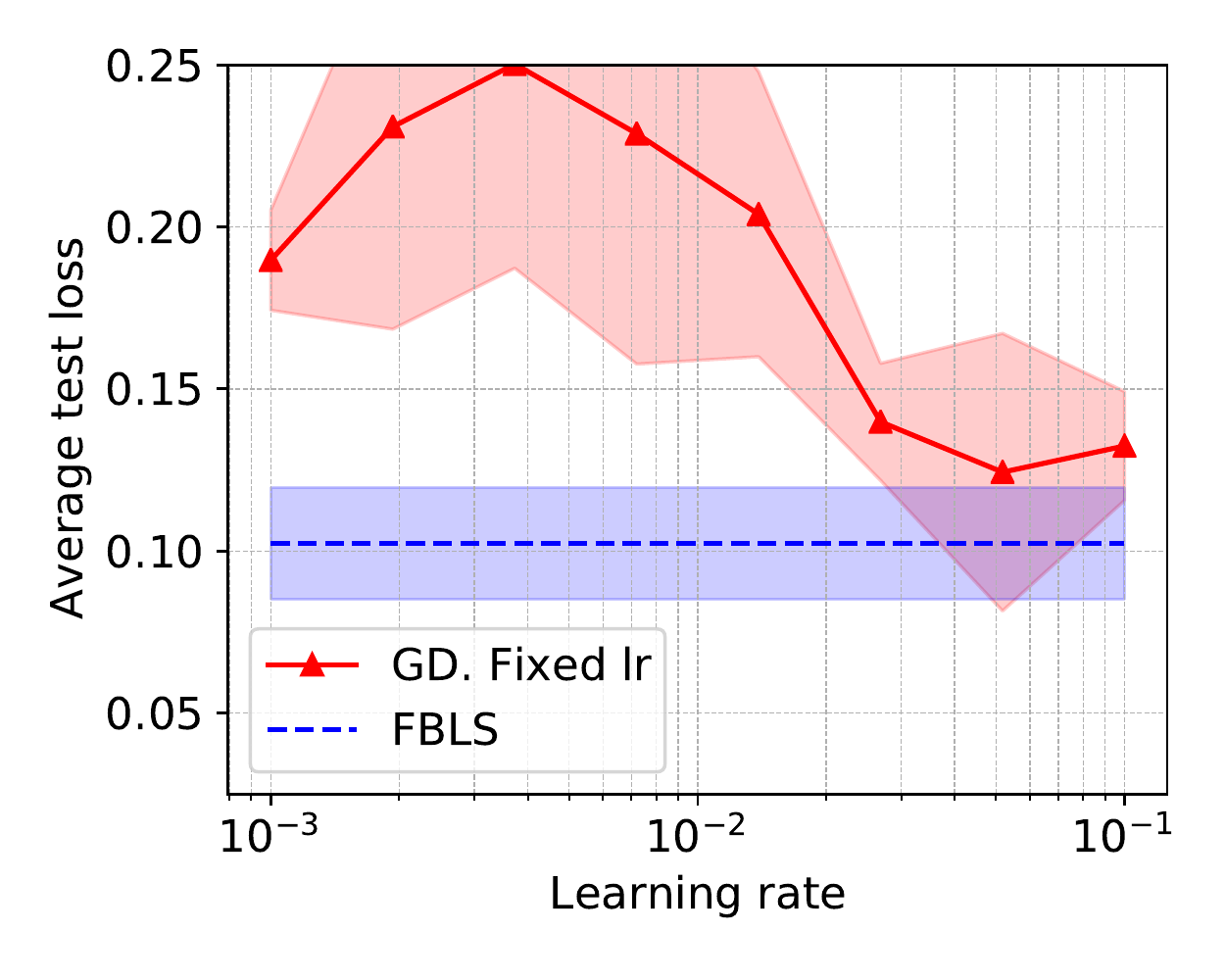}
    \end{subfigure}%
    \caption{Results of Backtracking Line Search perfomance and constant learning rate SGD after 50 epochs on MultiCIFAR10 dataset.}
    \label{fastbtmtl:fig:mcifar}
\end{figure}

\begin{figure*}
    \begin{subfigure}[b]{0.25\textwidth}
            \centering
            \includegraphics[width=\linewidth]{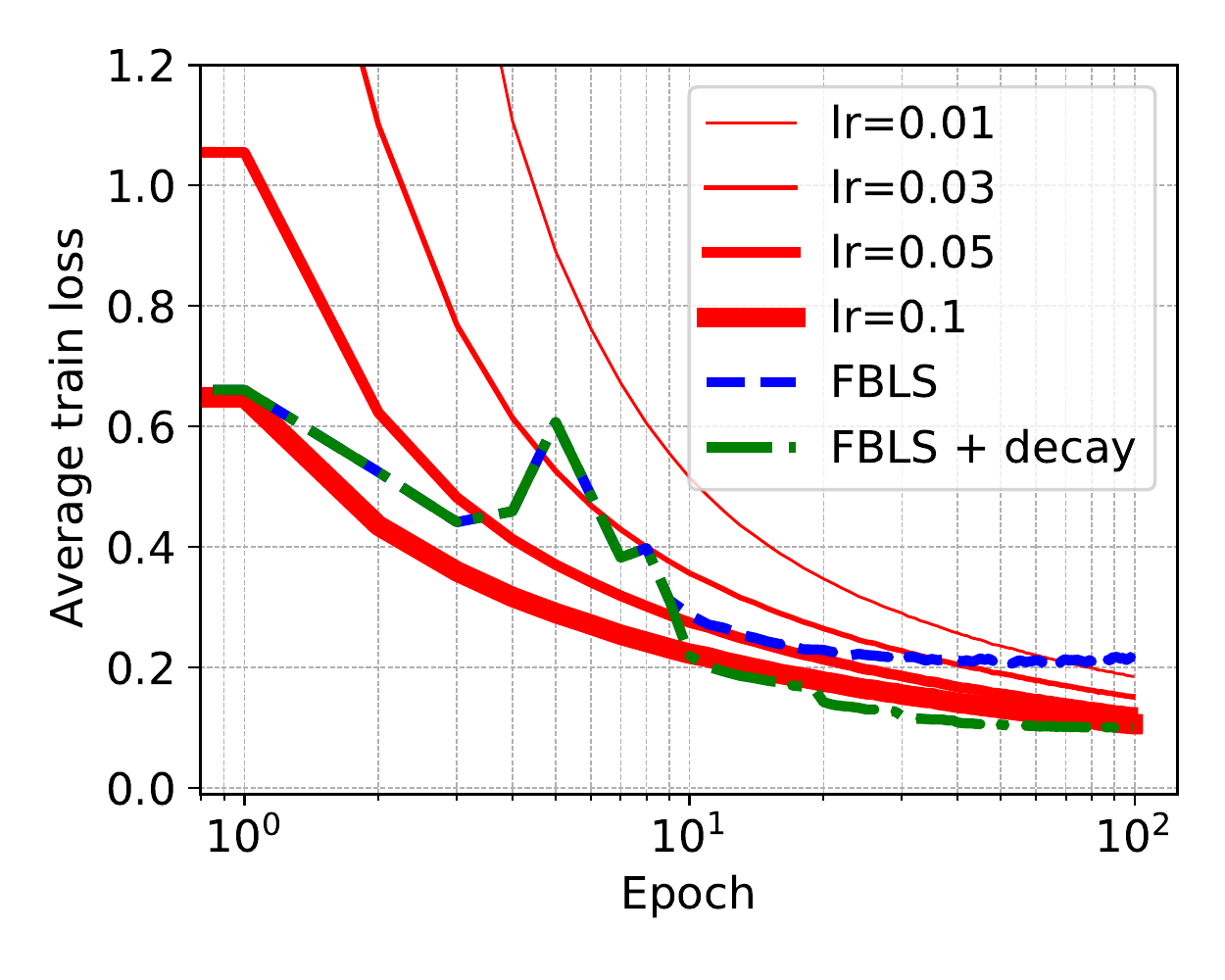}
    \end{subfigure}%
    \begin{subfigure}[b]{0.25\textwidth}
            \centering
            \includegraphics[width=\linewidth]{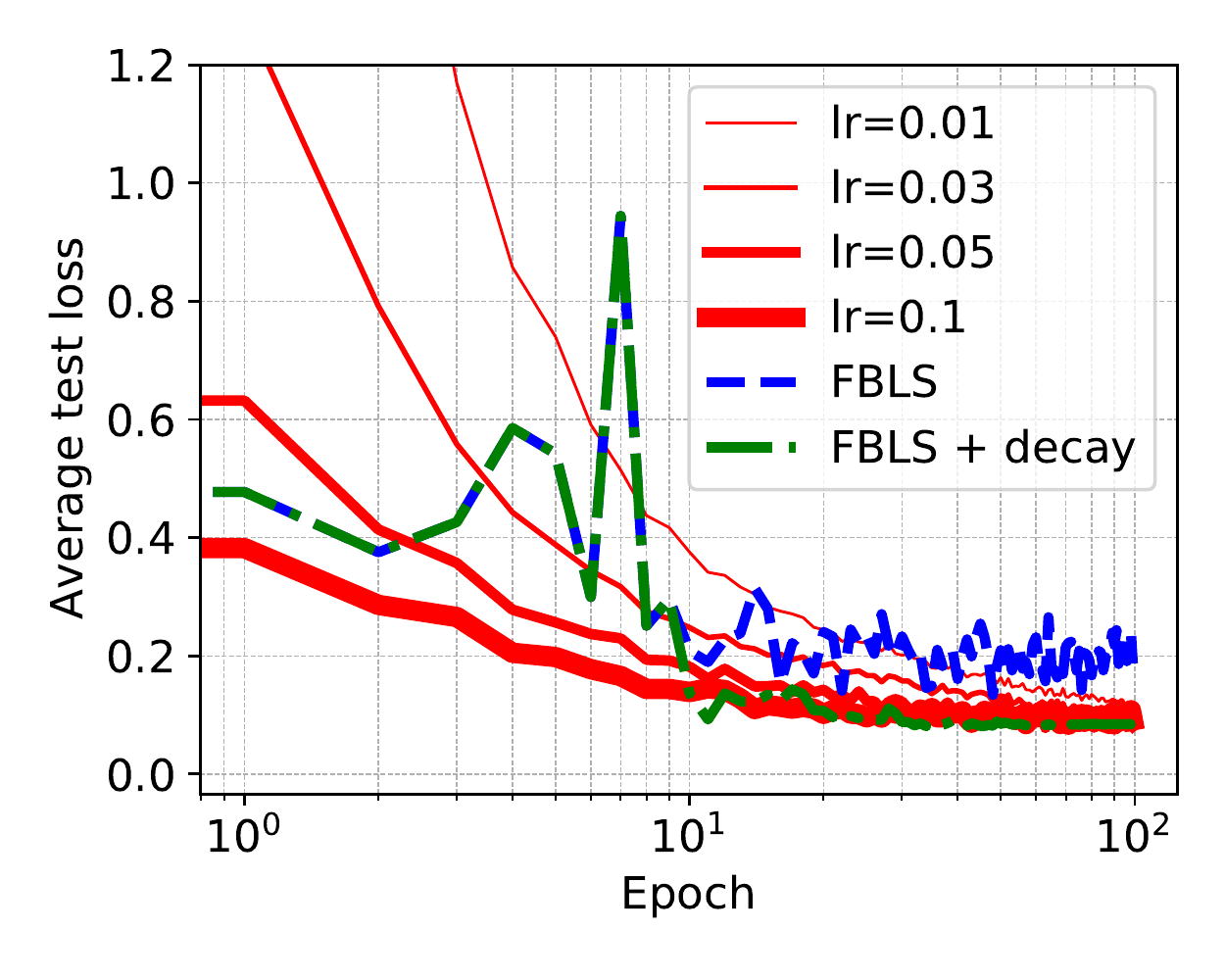}
    \end{subfigure}%
    \begin{subfigure}[b]{0.25\textwidth}
            \centering
            \includegraphics[width=\linewidth]{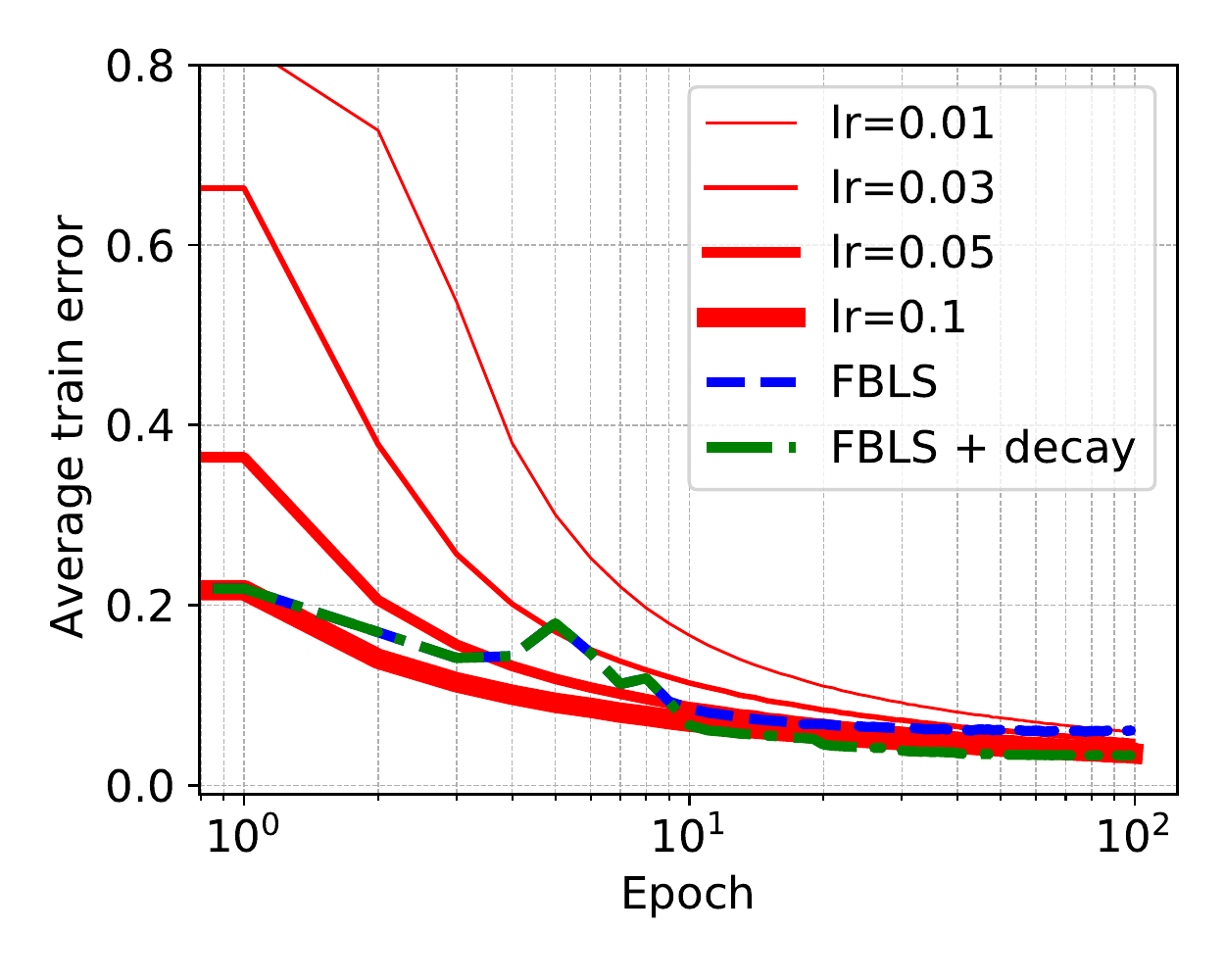}
    \end{subfigure}%
    \begin{subfigure}[b]{0.25\textwidth}
            \centering
            \includegraphics[width=\linewidth]{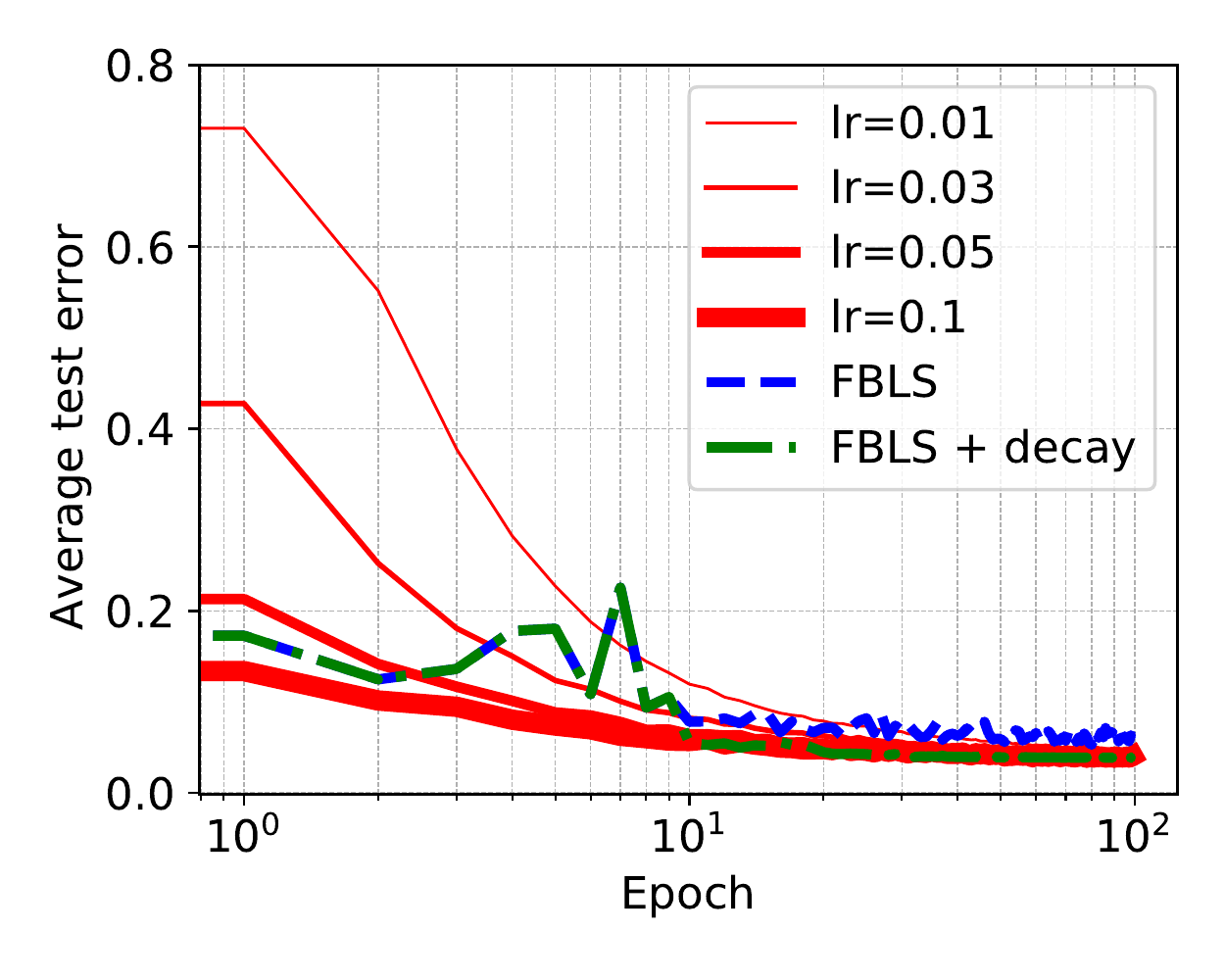}
    \end{subfigure}
    \\
    \begin{subfigure}[b]{0.25\textwidth}
            \centering
            \includegraphics[width=\linewidth]{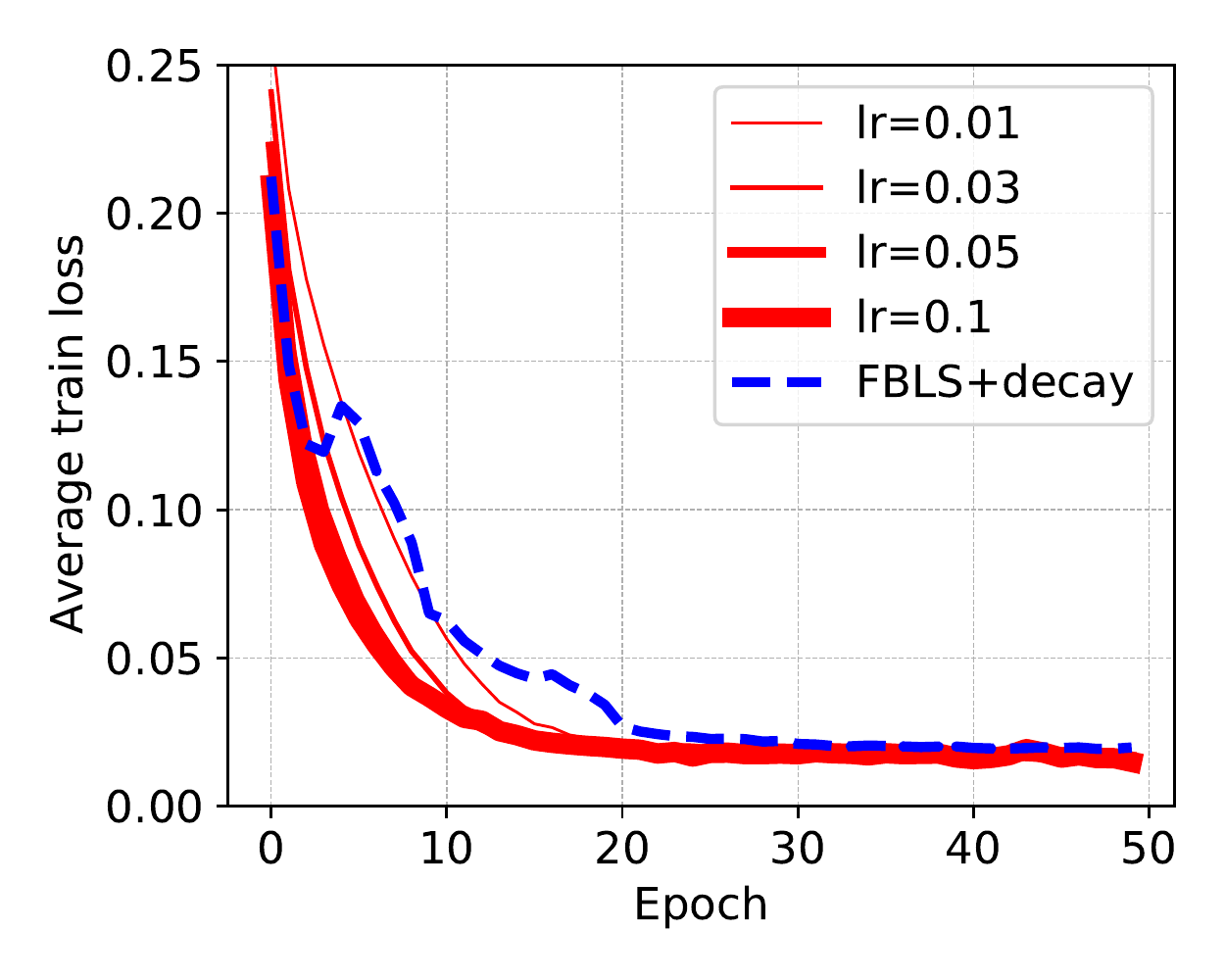}
            \caption{{\small \texttt{Train loss}}}
    \end{subfigure}%
    \begin{subfigure}[b]{0.25\textwidth}
            \centering
            \includegraphics[width=\linewidth]{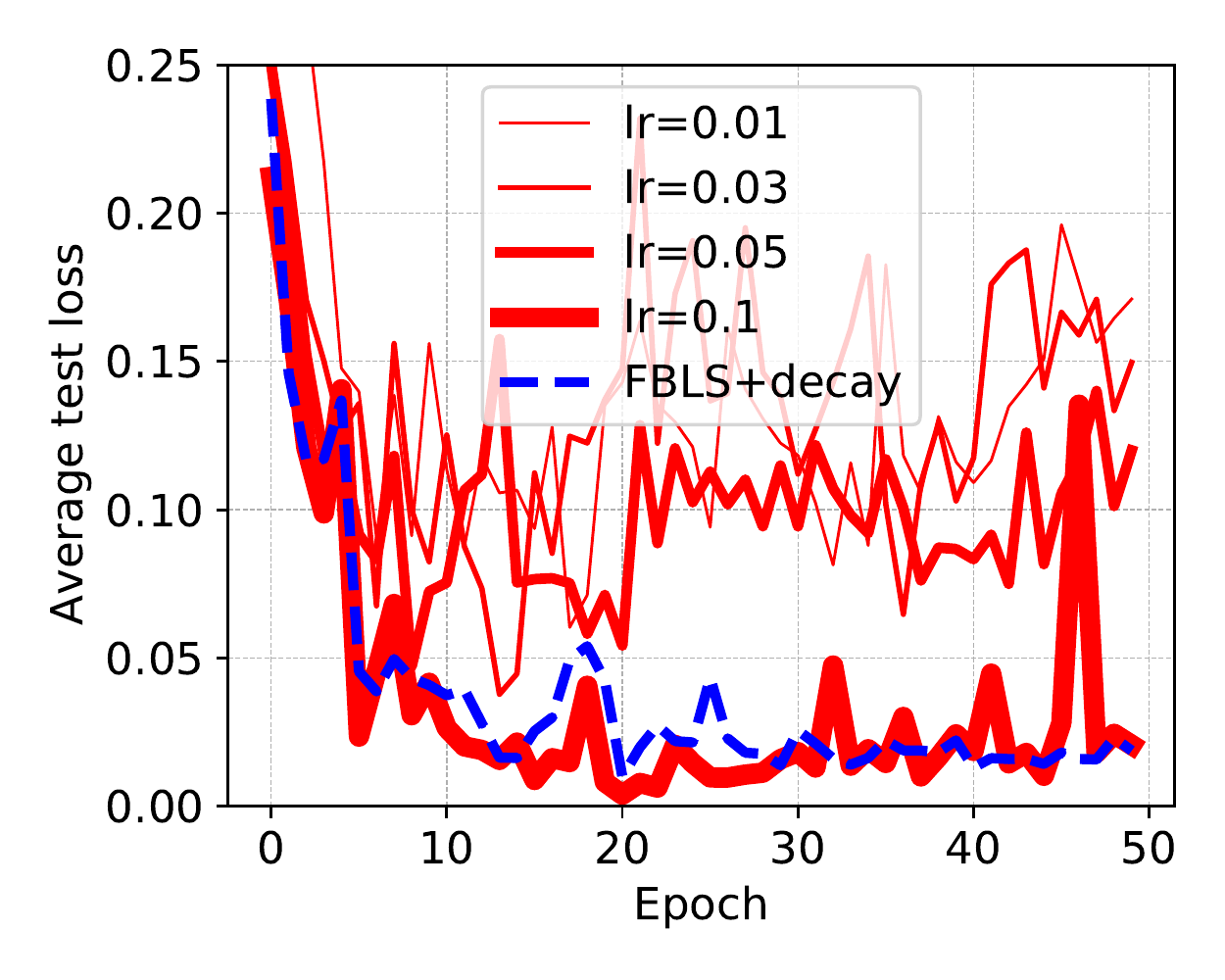}
            \caption{{\small \texttt{Test loss}}}
    \end{subfigure}%
    \begin{subfigure}[b]{0.25\textwidth}
            \centering
            \includegraphics[width=\linewidth]{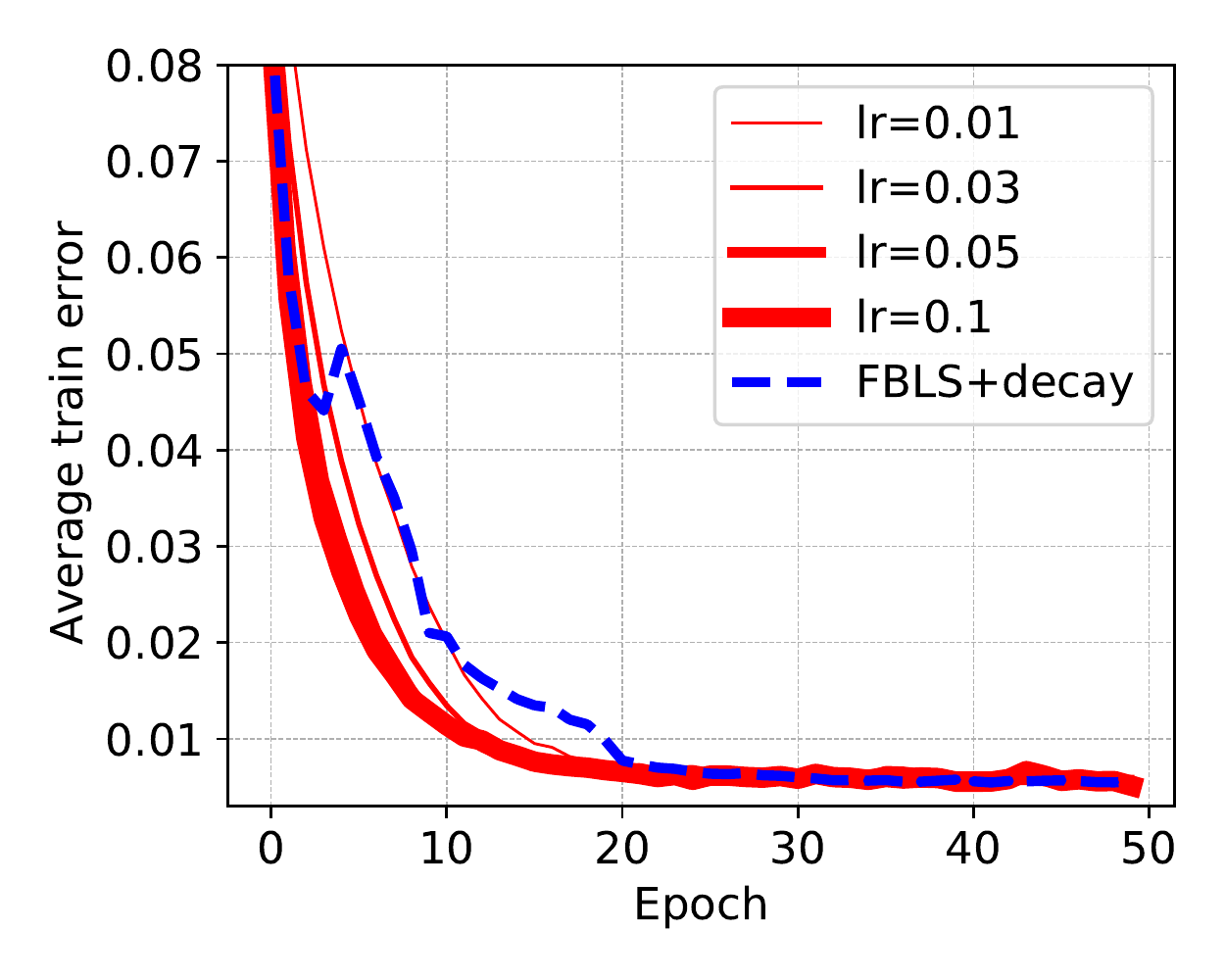}
            \caption{{\small \texttt{Train error}}}
    \end{subfigure}%
    \begin{subfigure}[b]{0.25\textwidth}
            \centering
            \includegraphics[width=\linewidth]{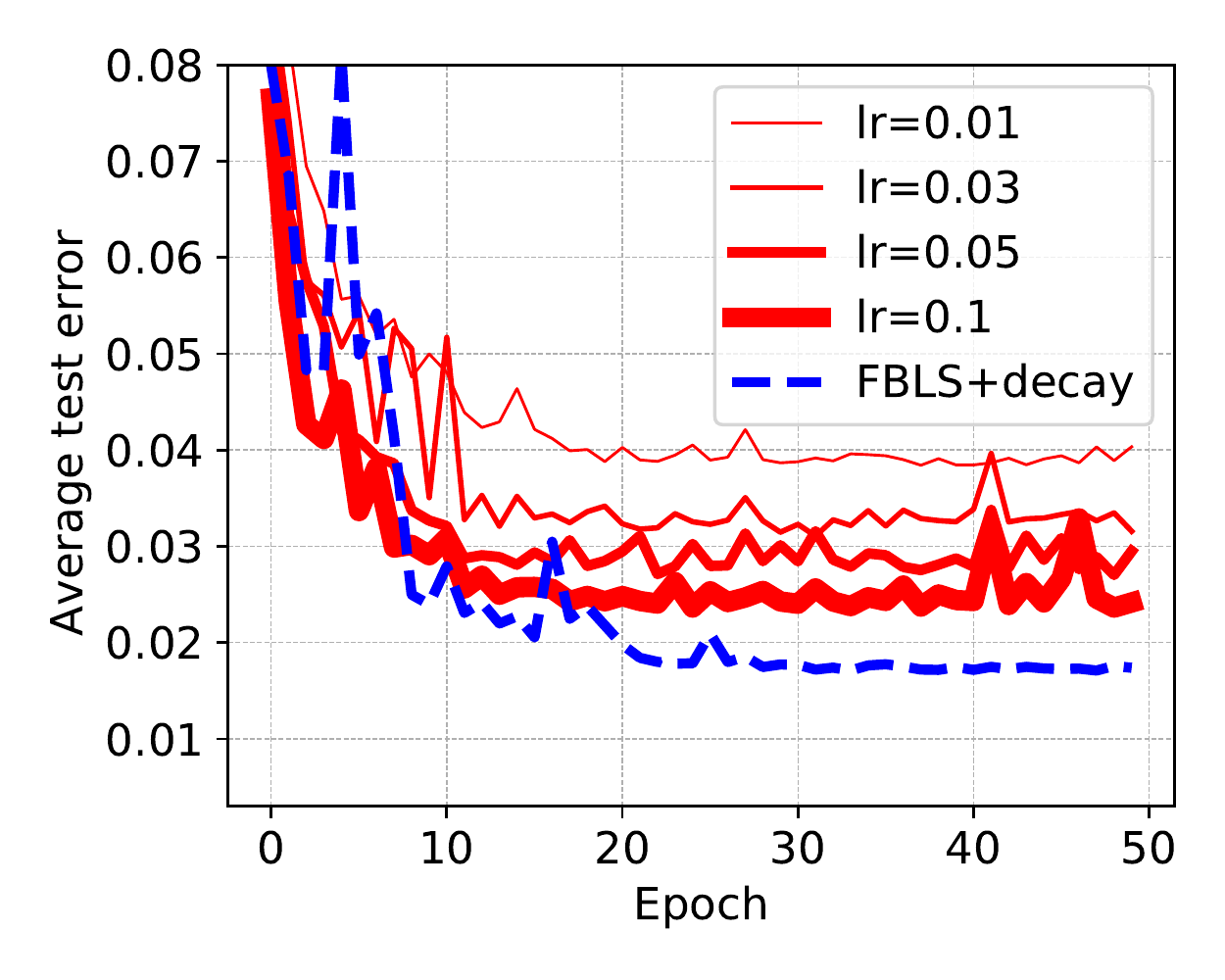}
            \caption{{\small \texttt{Test error}}}
    \end{subfigure}
    \caption{The learning curves for the different algorithms. Averaging here was on the different tasks. \textit{Top row:} MultiMNIST dataset. Data is averaged on $T = 2$ tasks and corresponding metrics. \textit{Bottom row:} CIFAR-10 dataset. Data is averaged on $T = 10$ tasks and corresponding metrics. These results demonstrate that FBLS often converges to points that are not worse than SGD for considered problems.}
\end{figure*}

For CIFAR-10, we use ten classes for creating ten synthetic "one vs rest" classification problems. The batch size has been set equal to 256. As an encoder, we have chosen scratch ResNet-18 and as decoder one fully connected layer.
We compare stochastic gradient and fast backtracking on these tasks (Figure \ref{fastbtmtl:fig:mcifar}). Eight step sizes for stochastic gradient descent have been chosen evenly from log-range from -3 to -1. As a loss, we chose cross-entropy, and as an error, we considered (1 - accuracy).

From Figure \ref{fastbtmtl:fig:mcifar} one can see that fast backtracking has the same final test loss as classical gradient descent. Also, fast backtracking is faster than classical gradient descent (see Table \ref{fastbtmtl:table:times}).

\subsection{Cityscapes}
As an encoder, we have chosen ResNet-50 pre-trained on Imagenet, and as decoders, we use PSPNet for disparity estimation, instance segmentation, and semantic segmentation. We compare stochastic gradient descent and fast backtracking. For  gradient we have chosen step sizes from $\{0.001, 0.01, 0.1\}$. 

\begin{table}[h!]
{\centering
\begin{tabular}{ccc}
\toprule
\textbf{Method } & \textbf{Average epoch time, $\cdot 10^3$ s} & \textbf{w/o} $\bm{\eta}$?  \\ 
\midrule
SGD   & 6.0  (100 \%) & $\bf{x}$\\ 
FBLS (Ours)  & 7.7 (128 \%) & \checkmark \\ 
\bottomrule
\end{tabular}
\caption{Time comparison for Cityscapes multi-task experiment. By "w/o~$\eta$?" we mean the ability to run an algorithm without selecting a learning rate $\eta$. The lower, the better. We took SGD as the baseline - 100 \%.}
\label{fastbtmtl:table:times_cityscapes}}
\end{table}

From the figures (see supplementary materials), we can see the competitive performance of fast backtracking against stochastic gradient descent. Also, we have a slight overhead in time by 28\%. Note that we need not select the initial learning rate in FBLS.

\subsection{BERT}
Transformers are widely used in tasks of natural language processing, as well as computer vision. In these models, the language model is the overwhelming part, which means that the time to compute the encoder forward/backward takes most of the optimization iteration. It would be especially interesting to study the FBLS algorithm for these models.

\begin{table}[h!]
{\centering
\begin{tabular}{ccc}
\toprule
\textbf{Method} & \textbf{Average epoch time, s} & \textbf{w/o} $\bm{\eta}$? \\ \midrule
SGD & 17.0 (100 \%) & $\bf{x}$ \\
BLS & 21.5 (126 \%) & \checkmark \\
MGDA-UB & 6.1 (36 \%) & $\bf{x}$\\
FBLS (Ours)  & 6.7 (39 \%) & \checkmark \\ 

\bottomrule
\end{tabular}
\caption{Time comparison for BERT on STS experiment. By "w/o~$\eta$?" we mean the ability to run an algorithm without selecting a learning rate $\eta$. The lower, the better. We took SGD as the baseline - 100 \%.}
\label{fastbtmtl:table:bert_times}}
\end{table}

For this reason, we have decided to measure epoch time for the Transformer-like model in a multi-task setting. We used BERT \cite{devlin2018bert} language model as the encoder on the STS benchmark \cite{cer2017semeval}. While the original benchmark was single-task oriented, we made an artificial multi-task setting with $T = 3$ same tasks. It means that we studied three independent decoders of the same architecture. Two fully connected layers were used as the decoder.

The results of Table \ref{fastbtmtl:table:bert_times} clearly show that the use of latent space approaches (FBLS and MGDA-UB) gives a significant reduction in the time per epoch compared to approaches based on entire encoder computation (SGD and BLS). The fastest method by time per epoch is MGDA-UB, but it inherits the disadvantage of SGD - the need to adjust the learning rate. As expected, FBLS is somewhat slower than MGDA-UB since some iterations of backtracking require several decoders forward/backward passes.

\section{Related work}
We recommend \cite{crawshaw2020multi, ruder2017overview} for surveys of modern multi-task learning. A classical approach to solve multiobjective optimization is a scalarization \cite{johannes1984scalarization}. Scalarization is a reduction of a multiobjective optimization problem to a single objective optimization problem. There are several techniques to provide it. The first one is weighting \cite{das1997closer}. In this case, one minimizes a weighted combination of objectives. The other approach is minimization on simplex constructed on minimums of every objective. These approaches included normal-boundary intersection method \cite{das1998normal}, normal constraints method \cite{messac2003normalized}. Also, there are several evolutionary strategies \cite{knowles2000approximating, deb2000fast}.

The other approach is based on the min-max technique. \cite{fliege2000steepest} uses min-max to standard gradient descent $L^2$-proximal task and gain method which output vector minimizing all direction evenly. The min-max technique was further adapted to Newton method \cite{fliege2009newton}, stochastic gradient \cite{fliege2011stochastic} and convergence rates were obtained \cite{fliege2019complexity}.

Initially, dual problem to aforementioned min-max approach convex hull norm minimization \cite{desideri2012multiple} became a new technique in multiobjective optimization. The method was modified in further works by adding the Gram-Schmidt process to gain solution \cite{desideri2012multiple}, modification the Gram-Schmidt process to gain fast inexact solution \cite{desideri2014multiple} and second-order method through normalization \cite{desideri2014multiple}. Also, it has been proved convergence in stochastic case \cite{mercier2018stochastic}. In \cite{sener2018multi} MGDA method was applied to latent space rather than parameter space. The replacement plays the role of the upper bound, and the solution in this space implies a solution in a shared parameter space. Adding task losses normalization \cite{katrutsa2020follow} in norm minimization creates a more robust solution in case of unbalanced tasks. The last method that can be categorized as gradient choosing method is \cite{yu2020gradient}. In this method, we change the initial solution cone to a proper "semi-dual" cone where every vector from this cone minimizes all losses and choose the average vector.

Besides gradient methods, adaptive weighting techniques were proposed to overcome the necessity of proper weighting search. \cite{kendall2018multi} used uncertainty of likelihood as weights. \cite{chen2018gradnorm} used loss term forcing gradient to update evenly. Also, \cite{liu2019end} used weighting based on rates of loss on the previous step and current step.

The separate direction in multi-task learning is architecture choice. It is considered that the choice of appropriate architecture allows introducing inductive bias. In \cite{long2017learning, yang2016deep} it is assumed what shared parameters have hidden tensor structure and examined tensor factorization and tensor normal distribution approaches. In \cite{misra2016cross, rosenbaum2017routing, ruder2019latent} controlling units were used. These units control the rate of sharing between intermediate encoder outputs. In \cite{lu2017fully, standley2019tasks} iterative neural architecture search was examined to build a multi-task model. In \cite{rebuffi2018efficient, meyerson2017beyond, liu2019end, maninis2019attentive} different attention mechanisms and adapters are used. These modules have fewer parameters than the main model, and they are used for solving multi-task and multi-domain problems. In \cite{zamir2018taskonomy} exhaustive investigation of relationships between computer vision tasks was worked out. 

\section{Conclusion}

In this work, we propose a novel optimization idea for multi-task learning. The idea is to use latent space to reduce the cost of line search. We examined this idea with a backtracking line search algorithm. We prove the convergence of the method theoretically. Also, we show the practical efficacy of the algorithm. The maximum efficiency of our method is achieved when the encoder is much larger than the decoder, which is valid on most NLP and CV models. 

The limitation of the Fast Backtracking Line Search is connected to SGD. We have to manually decay step size as it converges to the upper bound, but a more effective solution could exist.

% \subsubsection*{Acknowledgments}

\bibliography{bibliography}
\clearpage
\appendix

% \end{document}

\section{Appendix}
\addtocounter{theorem}{-1}

\subsection{Theorem proof}
\theconv*

\begin{proof}

\begin{enumerate}
    \item Let $\bm{u}^t = \nabla_{\bm{\theta}^{t}} L^t$.  
    Let $\bm{\bar{\theta}}$ be an accumulation point of  $\{\bm{\theta}_k\}_{k=1}^{\infty}$. Then, there is a subsequence $\{\bm{\theta}_{k_j} \}_{j=1}^{\infty}$ converging to $\bm{\bar{\theta}}$. As $\hat{\bm{\gL}}$ continuous: $\hat{\bm{\gL}}(\bm{\theta}_{k_j}) \rightarrow \hat{\bm{\gL}}(\bm{\bar{\theta}})$. Consequently, $\eta\beta\|\bm{u}^t\|^2 \rightarrow 0$. There is an alternative:
    
    \begin{itemize}
        \item $\lim \sup \eta > 0$
        \item $\lim \eta = 0$
    \end{itemize}
    
In the first case, $\|\bm{u}^t\|^2 \rightarrow 0$, hence, $\frac{\partial \hat{\gL}^t}{\partial \bm{z}} = \bm{u}^t\frac{\partial \bm{\theta}^{t}}{\partial \bm{z}} \rightarrow 0$ and according to~\cite{sener2018multi} $\bm{\bar{\theta}}$~--- Pareto stationary point.

In the second case, assume that the accumulation point $\bm{\bar{\theta}}$ is not Pareto stationary point. Then, there is $\bar{\bm{d}}$~--- a direction which minimizes all functions $\hat{\gL}^t$. Since $\lim \eta = 0$, for every constant step size $\eta_n$, starting from some $j_0$ we can't satisfy Armijo condition for at least for one function:

\[
\forall \eta_n = \frac{1}{n} \ \exists j_0: \forall j \geq j_0 \ \exists t_n:
\]

\[
\hat{\gL}^{t_n}(\bm{z}_{k_j}-\eta_n\bm{d}_{k_j},  \bm{\theta}_{k_j}-\eta_n \bm{u}_j^{t_n}) \geq \hat{\gL}^{t_n}(\bm{z}_{k_j}) - \eta_n\beta\|\bm{u}_j^{t_n}\|^2
\] 

Since the sequence of indices $\{t_n\}$ is bounded by the number of tasks $T$, there is a subsequence of indices $\{t_{n_m}\}$ that converges to some index  $t_0$. 

Hence, starting from some $j_0$ we get that for $\hat{\gL}^{t_0}$ that $\forall j \geq j_0$:

\[
\hat{\gL}^{t_0}(\bm{z}_{k_j} - \eta_n \bm{d}_{k_j},  \bm{\theta}_{k_j}^{t_0} - \eta_n \bm{u}_{j}^{t_0}) \geq \hat{\gL}^{t_0}(\bm{z}_{k_j}) - \eta_n\beta\|\bm{u}_{j}^{t_0}\|^2
\]

As $\hat{\gL}^{t_0}$ is continuously differentiable and $\bm{\theta}_{k_j} \rightarrow \bm{\bar{\theta}}$ then  $\bm{d}_{k_j} \rightarrow \bm{d}$ and we get:

\[
    \hat{\gL}^{t_0}(\bm{z} - \eta_n\bm{d}, \bar{\bm{\theta}}^{t_0} - \eta_n\bm{u}^{t_0}) \geq \hat{\gL}^{t_0}(\bm{z}) - \eta_n\beta\|\bm{u}^{t_0}\|^2
\]

It is true for $\forall \eta_n = \frac{1}{n},$ where $n \in \mathbb{N}$. Thus, we get a contradiction with Armijo rule: since if $\bm{d}$~--- the minimizing direction, then $\exists \hat{\eta}: \ \forall t \in \{1, \ldots, T \}$:

\[
 \hat{\gL}^{t}(\bm{z}-\hat{\eta}\bm{d}, \bar{\bm{\theta}}^{t} - \hat{\eta} \bm{u}^{t}) \leq \hat{\gL}^{t}(\bm{z}) - \hat{\eta}\beta \|\bm{u}^{t}\|^2  - \hat{\eta}\beta \left(\frac{\partial \hat{\gL}^t}{\partial \bm{z}}\right)^T \bm{d}_z.
\] 

So $\bm{d} = \bm{0}$ and $\bm{\bar{\theta}}$~--- Pareto stationary point.
\item For this Armijo-rule, the proof can be obtained by the following modifications
As $\hat{\bm{\gL}}(\bm{\theta}_{k_j}) \rightarrow \hat{\bm{\gL}}(\bm{\bar{\theta}})$, then  $\eta\beta \left(\frac{\partial \hat{\gL}^t}{\partial \bm{z}}\right)^T \bm{d}~\rightarrow~0.$

In the first variant of alternative we have $\forall t \quad \left(\frac{\partial \hat{\gL}^t}{\partial \bm{z}}\right)^T \bm{d} = 0$. As $\|\frac{\partial \hat{\gL}^t}{\partial z}\|_{t=1}^T$ isn't a singular matrix, then $\bm{d} = \bm{0}$ and $\bm{\bar{\theta}}$~--- Pareto stationary point \cite{sener2018multi}.

In the second variant of alternative we can change all $\eta\beta \|\bm{u}^{t_0}\|^2$ to $\eta\beta \left(\frac{\partial \hat{\gL}^t}{\partial \bm{z}}\right)^T \bm{d}$ and the proof doesn't change.

\item For this Armijo-rule proof can be obtained by the following modifications.
As $\hat{\bm{\gL}}(\bm{\theta}_{k_j}) \rightarrow \hat{\bm{\gL}}(\bar{\bm{\theta}})$ we have $\eta\beta \left(\frac{\partial \hat{\gL}^t}{\partial \bm{z}}\right)^T \bm{d} + \eta\beta\|\bm{u}^t\|^2 \rightarrow 0 $. Both terms are non-negative, so we have the same alternative.

In the first variant of alternative we have that  $\bm{d} = \bm{0}, \bm{u}^t = \bm{0}$, so $\bm{\bar{\theta}}$~--- Pareto stationary point by \cite{sener2018multi}.

In the second variant of alternative we can change all  $\eta\beta \|\bm{u}^t\|^2$ to $\eta\beta \left(\frac{\partial L^t}{\partial \bm{z}}\right)^T \bm{d} + \eta\beta\|\bm{u}^t\|^2$ and the proof doesn't change.
\end{enumerate}
\end{proof}
\begin{remark}
By the Bolzano–Weierstrass theorem, under our conditions, there is at least one accumulation point. Thus, our algorithm's result will always be a Pareto stationary point.
\end{remark}
\subsection{Cityscapes}
This section presents the results on the Cityscapes dataset. Comparison of FBLS and SGD with fixed learning rate is presented on the Figure \ref{fast_bt:fig:cityscapes}.

\begin{figure*}[ht!]
    \centering % <-- added
\begin{subfigure}{0.31\textwidth}
  \includegraphics[width=\linewidth]{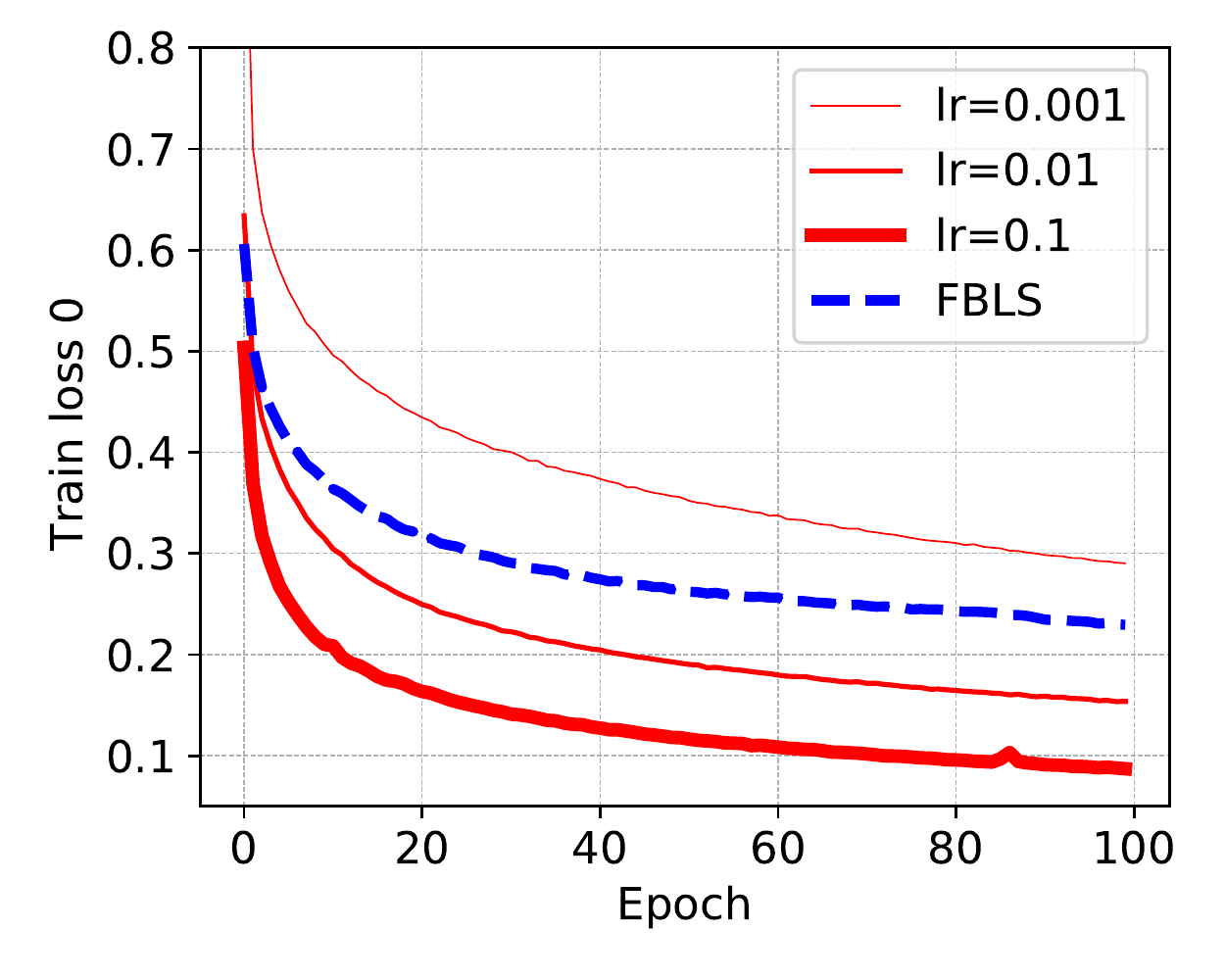}
\end{subfigure}\hfil % <-- added
\begin{subfigure}{0.31\textwidth}
  \includegraphics[width=\linewidth]{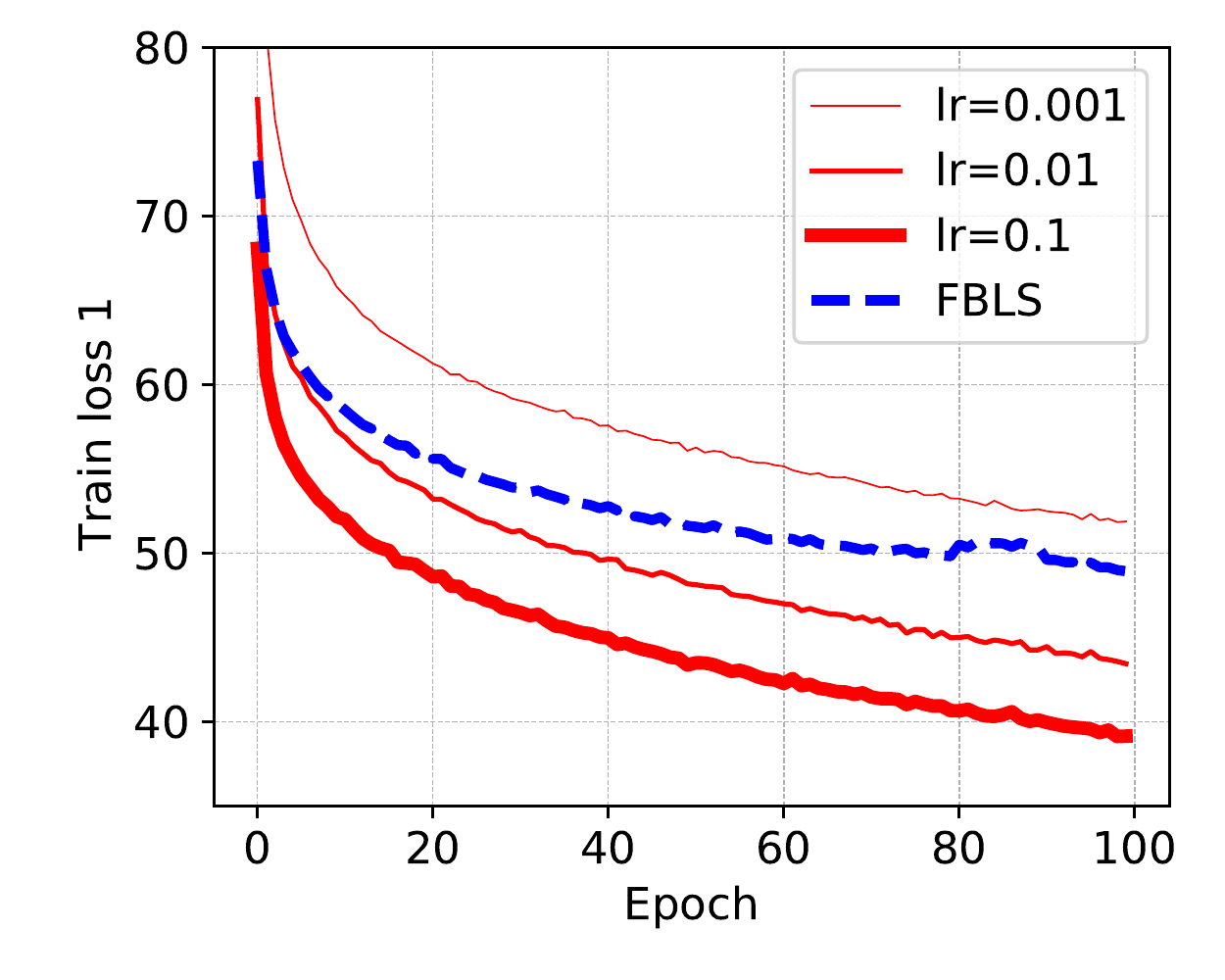}
\end{subfigure}\hfil % <-- added
\begin{subfigure}{0.31\textwidth}
  \includegraphics[width=\linewidth]{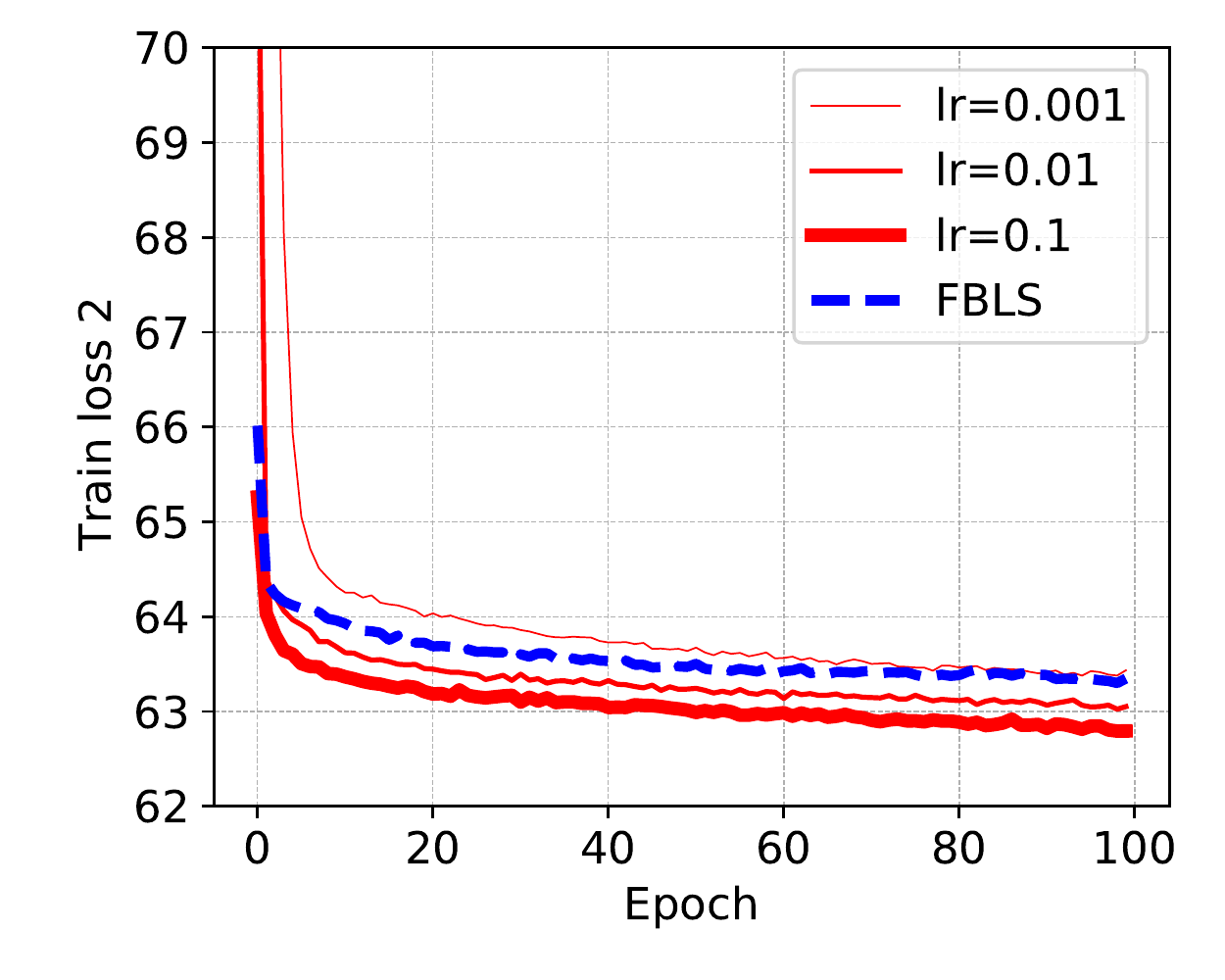}
\end{subfigure}
\begin{subfigure}{0.31\textwidth}
  \includegraphics[width=\linewidth]{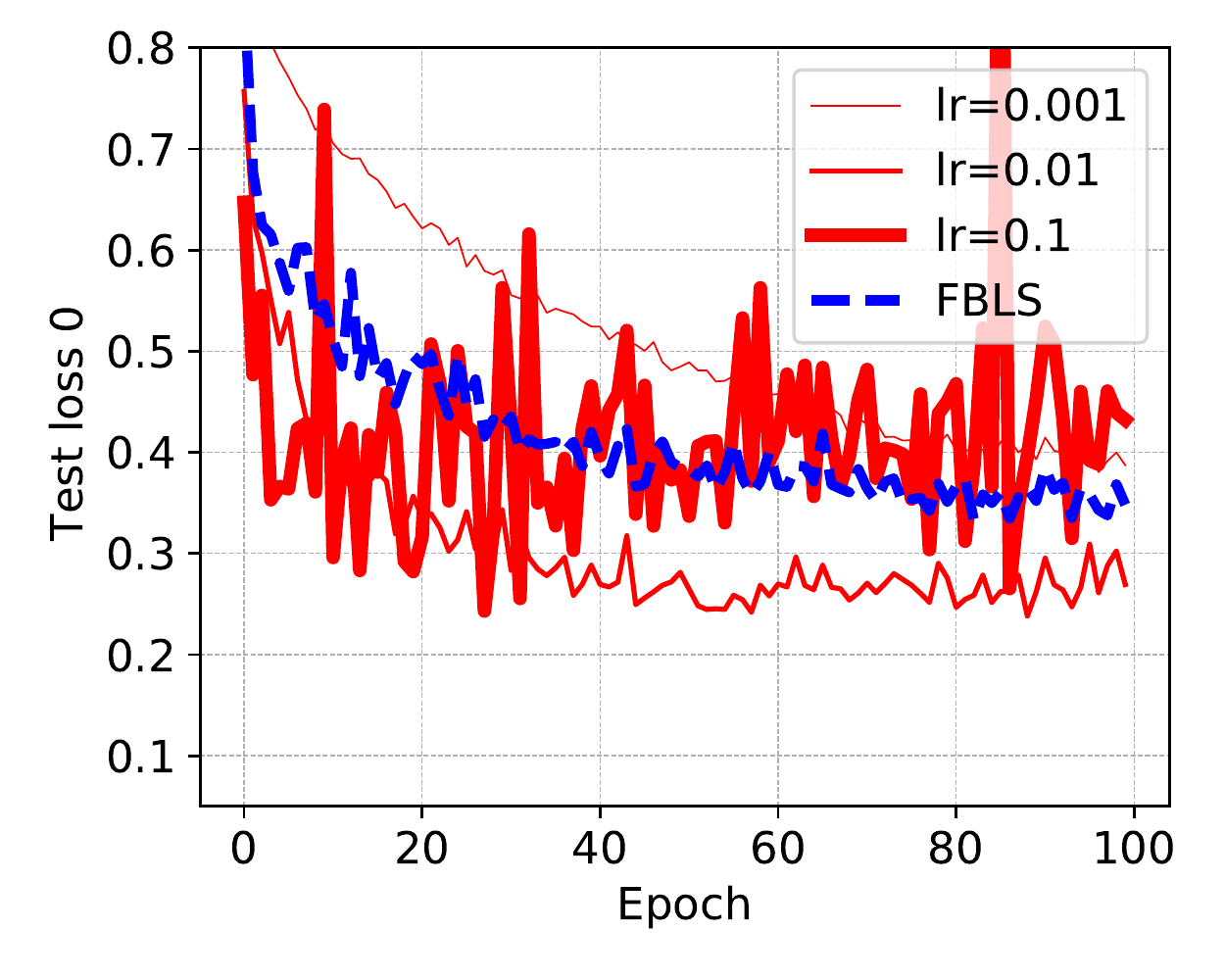}
  \caption{Visualization of train and test cross entropy loss for semantic segmentation task.}
  \label{fig:4}
\end{subfigure}\hfil % <-- added
\begin{subfigure}{0.31\textwidth}
  \includegraphics[width=\linewidth]{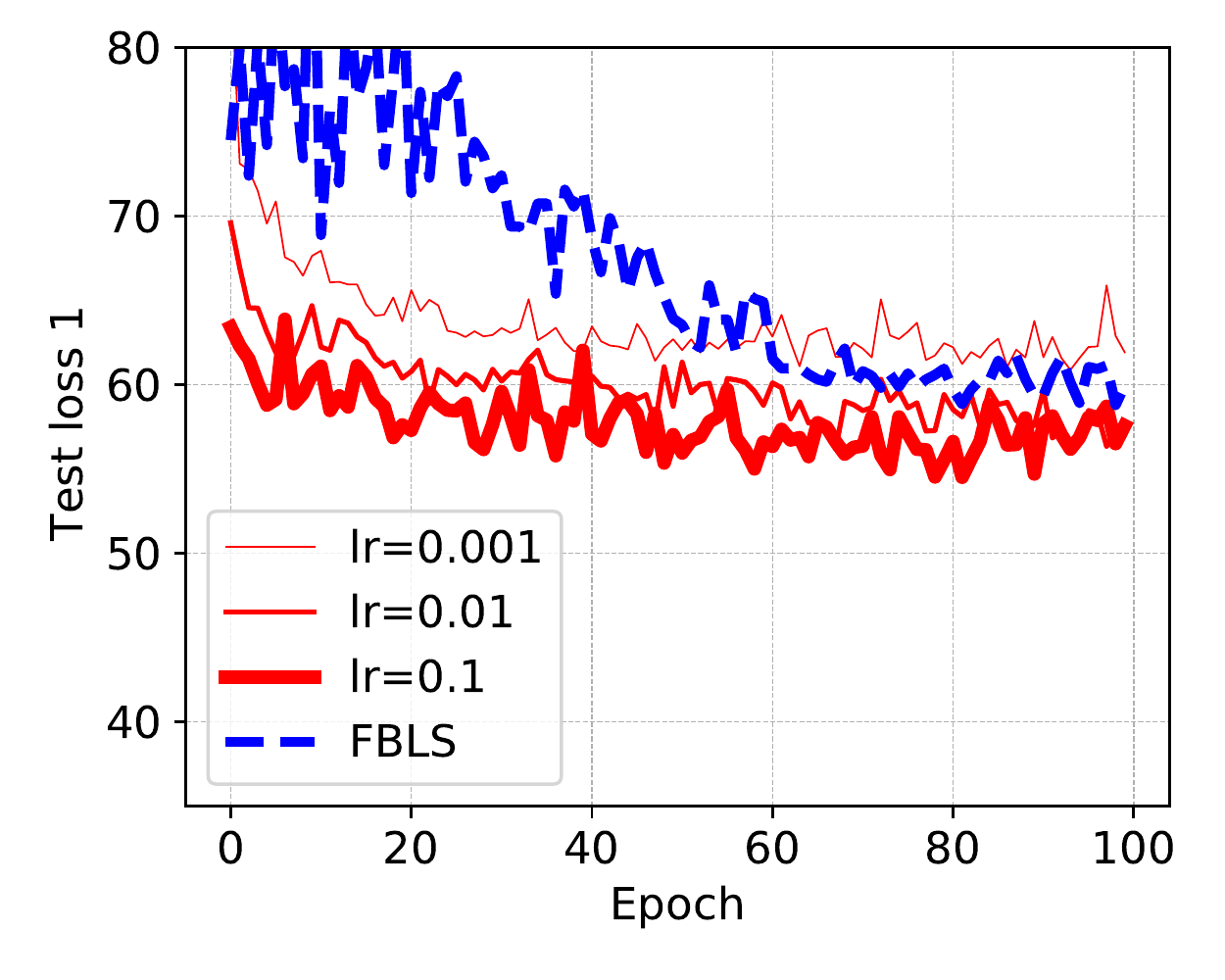}
  \caption{Visualization of train and test $l_1$ loss for instance segmentation task}
  \label{fig:5}
\end{subfigure}\hfil % <-- added
\begin{subfigure}{0.31\textwidth}
  \includegraphics[width=\linewidth]{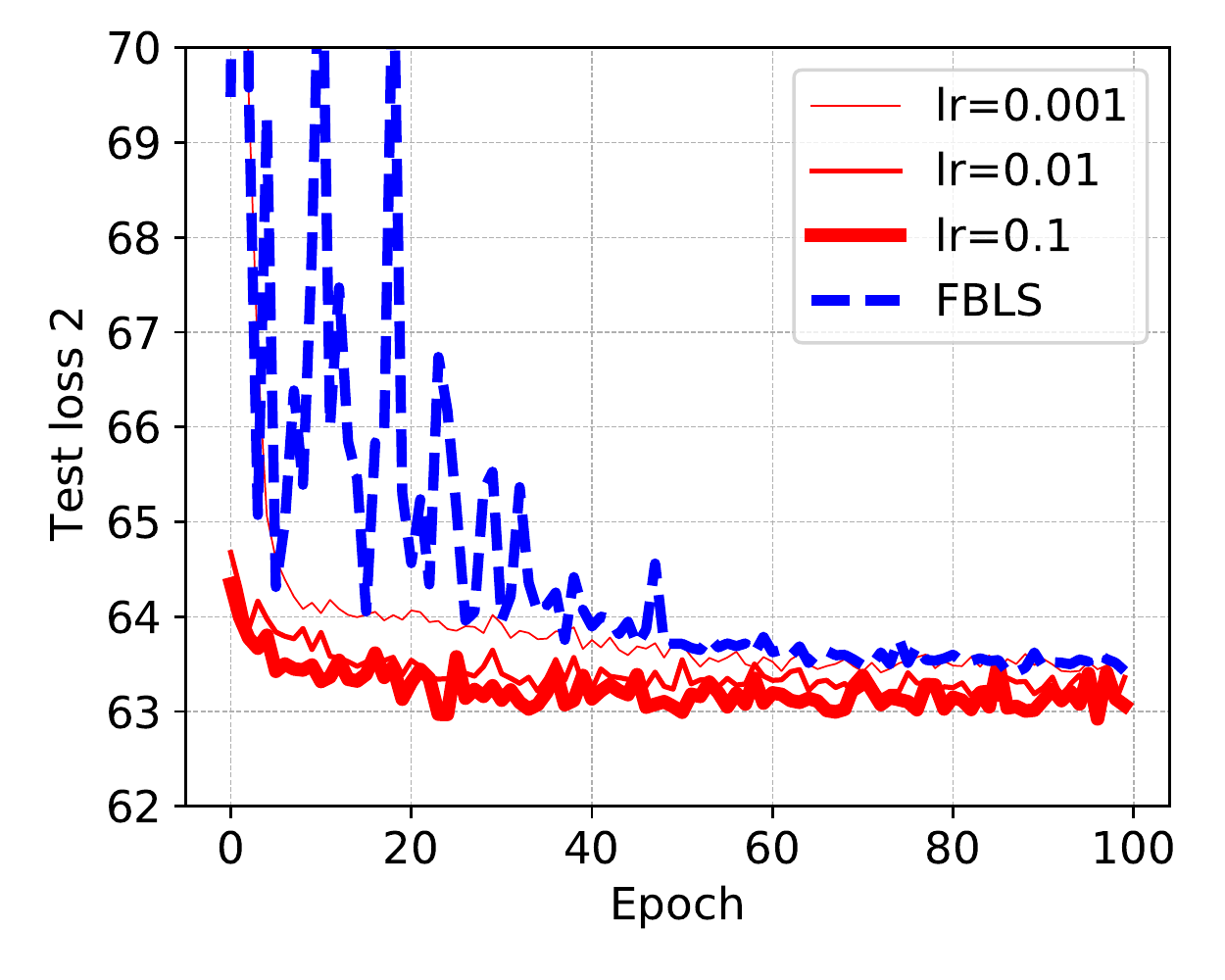}
  \caption{Visualization of train and test $l_1$ loss for disparity estimation task.}
  \label{fig:6}
\end{subfigure}
\caption{\textit{Top:} Train losses on cityscapes dataset. \textit{Bottom:} Test losses on cityscapes dataset.}
\label{fast_bt:fig:cityscapes}
\end{figure*}

% \begin{figure*}
%       \includegraphics[width=0.4\linewidth]{figures/mcity_train_loss0.pdf}
%       \label{fastbtmtl:fig:city_train_loss0}
%       \centering
%       \includegraphics[width=0.4\linewidth]{figures/mcity_test_loss0.pdf}
%       \label{fastbtmtl:fig:city_test_loss0}
%     \caption{Visualization of train and test cross entropy loss for semantic segmentation task.}
% \end{figure*}
% \begin{figure*}
%       \includegraphics[width=0.4\linewidth]{figures/mcity_train_loss1.pdf}
%       \label{fastbtmtl:fig:city_train_loss1}
%       \centering
%       \includegraphics[width=0.4\linewidth]{figures/mcity_test_loss1.pdf}
%       \label{fastbtmtl:fig:city_test_loss1}
%     \caption{Visualization of train and test $l_1$ loss for instance segmentation task}
% \end{figure*}
% \begin{figure*}
%       \includegraphics[width=0.4\linewidth]{figures/mcity_train_loss2.pdf}
%       \label{fastbtmtl:fig:mnist_train_loss2}
%       \centering
%       \includegraphics[width=0.4\linewidth]{figures/mcity_test_loss2.pdf}
%       \label{fastbtmtl:fig:city_test_loss2}
%     \caption{Visualization of train and test $l_1$ loss for disparity estimation task.}
% \end{figure*}
\end{document}